\newcommand*{\QEDA}{\hfill\hbox{\vrule width1.0ex height1.0ex}}
\let\AND\relax
\newcommand{\rd}{\mathrm{d}}
\newtheorem{thm}{Theorem}[section]
\newtheorem{theorem}[thm]{Theorem}
\newtheorem{lemma}[thm]{Lemma}
\newtheorem{proposition}[thm]{Proposition}
\newtheorem{definition}[thm]{Definition}
\newtheorem{remark}[thm]{Remark}
\newcommand{\beq}{\begin{equation}}
\newcommand{\eeq}{\end{equation}}
\newcommand{\beqa}{\begin{eqnarray}}
\newcommand{\eeqa}{\end{eqnarray}}
\newcommand{\beqas}{\begin{eqnarray*}}
\newcommand{\eeqas}{\end{eqnarray*}}
\newcommand{\bi}{\begin{itemize}}
\newcommand{\ei}{\end{itemize}}
\newcommand{\vgap}{\vspace{.1in}}
\newcommand{\nn}{\nonumber}
\newcommand{\R}{\mathbb{R}}
\newcommand{\lam}{{\lambda}}
\newcommand{\norm}[1]{\left\Vert#1\right\Vert}
\newcommand{\inner}[2]{\langle #1,#2\rangle}
\newcommand{\argmin}{\mathrm{argmin}\,}
\newcommand{\Argmin}{\mathrm{Argmin}\,}
\newcommand{\tx}{\tilde x}
\newcommand{\mE}{\mathbb{E}}
\title{A Proximal Algorithm for Sampling}
\author{\name Jiaming Liang \email jiaming.liang@yale.edu \\
      \addr Department of Computer Science, Yale University, New Haven, CT 06511.\\
      \AND
      \name Yongxin Chen \email yongchen@gatech.edu \\
      \addr School of Aerospace Engineering, Georgia Institute of Technology, Atlanta, GA 30332.}
\begin{document}

\maketitle

\begin{abstract}
We study sampling problems associated with potentials that may lack smoothness or convexity. 
% The potentials can be either convex or non-convex.
Departing from the standard smooth setting, the potentials are only assumed to be weakly smooth or non-smooth, or the summation of multiple such functions. We develop a sampling algorithm that resembles proximal algorithms in optimization for this challenging sampling task. Our algorithm is based on a special case of Gibbs sampling known as the alternating sampling framework (ASF). The key contribution of this work is a practical realization of the ASF based on rejection sampling for both non-convex and convex potentials that are not necessarily smooth.
In almost all the cases of sampling considered in this work, our proximal sampling algorithm achieves a better complexity than all existing methods. 
\end{abstract}

\section{Introduction}

% {\liang need to compare with ``Unadjusted Langevin algorithm for sampling a mixture of weakly smooth potentials"}

% {\liang merge convex literature, table, results (as some special cases in text) from the second paper into this paper}

The problem of drawing samples from an unnormalized probability distribution plays an essential role in data science and scientific computing \citep{durmus2018efficient,clarage1995sampling,maximova2016principles}. It has been widely used in many areas such as Bayesian inference, Bayesian neural networks, probabilistic graphical models, biology, and machine learning \citep{gelman2013bayesian,kononenko1989bayesian,koller2009probabilistic,krauth2006statistical,sites2003delimiting,durmus2018efficient}. Compared with optimization oriented methods, sampling has the advantage of being able to quantify the uncertainty and confidence level of the solution, and often provides more reliable solutions to engineering problems. This advantage comes at the cost of higher computational cost. It is thus important to develop more efficient sampling algorithms. 

In the classical setting of sampling, the potential function $f$ of an unnormalized target distribution $\exp(-f(x))$ is assumed to be smooth  and (strongly) convex. Over the past decades, many sampling algorithms have been developed, including Langevin Monte Carlo (LMC), kinetic Langevin Monte Carlo (KLMC), Hamiltonian Monte Carlo (HMC), Metropolis-adjusted Langevin algorithm (MALA), etc \citep{dalalyan2017theoretical,grenander1994representations,parisi1981correlation,roberts1996exponential,dalalyan2020sampling,bou2013nonasymptotic,roberts2002langevin,roberts1996exponential,neal2011mcmc}. Many of these algorithms are based on some type of discretization of the Langevin diffusion or the underdamped Langevin diffusion. They resemble the gradient-based algorithms in optimization. 
These algorithms work well for (strongly) convex and smooth  potentials; many non-asymptotic complexity bounds have been proven. However, the cases where either the convexity or the smoothness is lacking are much less understood \citep{chewi2021analysis,CheCheSalWib22,chatterji2020langevin,erdogdu2021convergence,LiaChe21,dalalyan2022bounding,balasubramanian2022towards,mou2022efficient}.

In this paper, we consider the challenging task of sampling from potentials that are not smooth and even not convex. Many sampling problems in real applications fall into this setting. For instance, Bayesian neural networks are highly non-convex models corresponding to probability densities with multi-modality \citep{izmailov2021bayesian}. The lack of smoothness is due to the use of activation functions such as ReLU. The goal of this work is to develop an efficient algorithm with provable guarantees for a class of potentials that are not smooth. In particular, we consider potential functions that are semi-smooth, defined by \eqref{ineq:semi-smooth}.
% In particular, we consider probability distributions that satisfy either the log-Sobolev inequality (LSI) or the Poincar\'e inequality (PI) \citep{Vil21}, and have potential functions $f$ that are semi-smooth, i.e., $f$ satisfies the condition \eqref{ineq:semi-smooth}.
% $\nabla f$ is H\"older continuous. 

Inspired by the recent line of research that lies in the interface of sampling and optimization, we examine this sampling task from an optimization perspective. We build on the intuition of proximal algorithms for non-smooth  optimization problems and 
develop a proximal algorithm to sample 
from non-convex and semi-smooth potentials. Our algorithm is based on the alternating sampling framework (ASF) \citep{lee2021structured} developed recently to sample from strongly convex potentials. In a nutshell, the ASF is a Gibbs sampler over a carefully designed augmented distribution of the target and one can thus sample from the target distribution by sampling from the augmented distribution. The convergence results of ASF have been recently improved \citep{CheCheSalWib22} to cover non-log-concave distributions that satisfy functional inequalities such as the Logarithmic Sobolev inequality (LSI) and the Poincar\'e inequality (PI).

The ASF is an idealized algorithm that is not directly implementable. In each iteration, it needs to query the so-called restricted Gaussian oracle (RGO), which is itself a sampling task from a quadratically regularized distribution $\exp(-f(x)-\frac1{2\eta} \|x-y\|^2)$ for some given $\eta>0$ and $y\in \R^d$. The RGO can be viewed as a sampling counterpart of the proximal map in optimization. The total complexity of ASF thus depends on that of the RGO. Except for a few special cases where $f$ has certain structures, the RGO is usually a challenging task. One key contribution of this work is a practical and efficient algorithm for RGO for potentials that are neither smooth nor convex. This algorithm extends the recent work \citet{LiaChe21} for convex and non-smooth  potentials. Combining the ASF and our algorithm for RGO, we establish a proximal algorithm for sampling from (convex or non-convex) semi-smooth potentials. Note that the techniques used in \citet{LiaChe21} are no longer applicable to these non-convex semi-smooth settings. In this work, we developed a new algorithm and associated proof techniques for efficient RGO implementation.

Our contributions are summarized as follows. i) We develop an efficient sampling scheme of RGO for semi-smooth potentials which can be either convex or non-convex and bound its complexity with a novel technique. ii) We combine our RGO scheme and the ASF to develop a sampling algorithm that can sample from both convex and non-convex semi-smooth potentials. Our algorithm has a better non-asymptotic complexity than almost all existing methods in the same setting. iii) We further extend our algorithms for potentials that are the summation of multiple semi-smooth functions.
% iv) Our results are the {\em first high-accuracy guarantee} for sampling problems with non-convex semi-smooth potentials.
% \item [iv)] {\color{red} can be removed optimization algorithm as byproduct}

% Our contributions are summarized as follows.
% \begin{itemize}
% \item [i)] We develop an efficient sampling scheme of RGO for semi-smooth potentials which can be either convex or non-convex and bound its complexity with a novel technique. 
% \item [ii)] We combine our RGO scheme and the ASF to develop a sampling algorithm that can sample from both convex and non-convex semi-smooth potentials. Our algorithm has a better non-asymptotic complexity than almost all existing methods in the same setting. 
% \item [iii)] We further extend our algorithms for potentials that are the summation of multiple semi-smooth functions. 
% \item [iv)] Our results are the {\em first high-accuracy guarantee} for sampling problems with non-convex semi-smooth potentials.
% % \item [iv)] {\color{red} can be removed optimization algorithm as byproduct}
% \end{itemize}

% i) We develop an efficient sampling scheme of RGO for non-convex weakly smooth potentials and bound its complexity with a novel technique.

% ii) We combine our RGO scheme and the ASF to establish a sampling algorithm that can sample from non-convex and weakly smooth potentials. Our algorithm has the better non-asymptotic complexity than all existing methods.

% iii) We further extend our algorithms for potentials that are summation of multiple non-convex weakly smooth functions. 
\begin{table}[H]
	\begin{centering}
 \caption{Complexity bounds for sampling from non-convex potentials.}\label{tab:t2}
		\begin{tabular}{|>{\centering}p{3cm}|>{\centering}p{6.65cm}|>{\centering}p{2.85cm}|>{\centering}p{1cm}|}
			\hline 
			{Source} & {Complexity}  & {Assumption} & {Metric} \tabularnewline
     			\hline
			{\citet{chewi2021analysis}} & {$ \tilde{\cal O}\left(\frac{C_{\rm PI}^{1+1/\alpha}L_\alpha ^{2/\alpha}d^{2+1/\alpha} }{\varepsilon^{1/\alpha}}\right)$} & {weakly smooth PI, $\alpha>0$} & {R\'enyi} 
			\tabularnewline
   			\hline 
			{this paper (Thm. \ref{thm:PI})} & {$\tilde {\cal O} \left(C_{\rm PI}L_\alpha^{2/(1+\alpha)} d^2 \right) $} & {semi-smooth PI} & {R\'enyi} 
			\tabularnewline
			\hline
			{\citet{NguDanChe21}} & {$ \tilde{\cal O}\left(C_{\rm LSI}^{1+\max\{1/\alpha_i\}} \left[\frac{n\max\{L_{\alpha_i}^2\} d}{\varepsilon}\right]^{\max\{1/\alpha_i\}} \right) $} & {$\alpha_i>0$, composite semi-smooth, LSI} & {KL} 
			\tabularnewline
			\hline
			{this paper (Thm. \ref{thm:KL})} & {$\tilde {\cal O}\left(
C_{\rm LSI}\sum_{i=1}^n L_{\alpha_i}^{2/(\alpha_i+1)}d\right) $} & {composite semi-smooth, LSI} & {KL} 
			\tabularnewline
			\hline 
			{this paper (Thm. \ref{thm:Renyi})} & {$\tilde {\cal O}\left(
C_{\rm PI}\sum_{i=1}^n L_{\alpha_i}^{2/(\alpha_i+1)}d\right) $} & {composite semi-smooth, PI} & {R\'enyi} 
			\tabularnewline
			\hline 
		\end{tabular}
		\par\end{centering}
\end{table}
{\bf Related works:} 
% 1. nonconvex\\
% 2. non-smooth \\
% 3. comparison, non-convex and non-smooth 
MCMC sampling from non-convex potentials has been investigated in \citet{raginsky2017non,vempala2019rapid,wibisono2019proximal,chewi2021analysis,erdogdu2021convergence,mou2022improved,LuuFadChe21}.
There have also been some works on sampling without smoothness \citet{lehec2021langevin,durmus2019analysis,chatterji2020langevin,chewi2021analysis,mou2022efficient,shen2020composite,durmus2018efficient,freund2022convergence,salim2020primal,bernton2018langevin,NguDanChe21,LiaChe21}.
The literature for the case where the potential function lacks both convexity and smoothness is rather scarce. In \citet{NguDanChe21,erdogdu2021convergence}, the authors analyze the convergence of LMC for weakly smooth potentials that satisfy a dissipativity condition. The target distribution is assumed to satisfy some functional inequality. \citet{erdogdu2021convergence} also assumes an additional tail growth condition. The dissipativity condition is removed in \citet{chewi2021analysis}. The results in \citet{NguDanChe21} are applicable to potentials that are the summation of multiple weakly smooth functions, while those in \citet{chewi2021analysis,erdogdu2021convergence} are not. To compare our results with them, we make the simplification that the initial distance, either in KL or R\'enyi divergence, to the target distribution is $\tilde {\cal O}(d)$. The results in cases with non-convex potentials are presented in Table \ref{tab:t2}.
 % and that for composite semi-smooth potentials are in Table \ref{tab:t2}.
 It can be seen that our complexities have better dependence on all the parameters: LSI constant $C_{\rm LSI}$, PI constant $C_{\rm PI}$, weakly smooth coefficients $L_\alpha$, and dimension $d$. Moreover, our complexity bounds depend polylogarithmically on the accuracy $\varepsilon$ (thus $\varepsilon$ does not appear in the $\tilde {\cal O}$ notation) while all the other results have polynomial dependence on $1/\varepsilon$. 

\section{Problem formulation and Background}

We are interested in sampling from distributions with potentials that are not necessarily smooth. 
More specifically, we consider the sampling task with the target distribution $\nu \propto \exp(-f(x))$ where
    % \begin{equation}\label{eq:target}
    %     \nu \propto \exp(-f(x)).
    % \end{equation} 
%  whose potential is of the form $f=f_1+f_2$ where $f_1$ is convex and satisfies %a generalized Hölder's continuous condition, i.e.,
%     \begin{equation}
%         \|f_1(u)-f_1(v)\| \le L_0 \|u-v\|,\quad \forall u, v\in \R^d,
%     \end{equation}
% and $f_2$ satisfies
%     \begin{equation}
%     \|\nabla f_2(u) - \nabla f_2(v)\| \le \sum_{i=1}^n L_{\alpha_i} \|u-v\|^{\alpha_i},\quad \forall u, v\in \R^d
%     \end{equation}
% for {\liang $\alpha_1,\ldots,\alpha_n \in (0,1]$????} and $L_{\alpha_1}, \ldots, L_{\alpha_n}>0$.
% The above two conditions can be more compactly written as
the potential $f$ satisfies
\begin{equation}\label{ineq:semi-smooth}
    \|f'(u) - f'(v)\| \le \sum_{i=1}^n L_{\alpha_i} \|u-v\|^{\alpha_i},\quad \forall u, v\in \R^d
    \end{equation}
for some $\alpha_i \in [0,1]$ and $L_{\alpha_i}>0$, $1\le i \le n$. Here $f'$ denotes a subgradient of $f$ in the Frechet subdifferential (see Definition \ref{def:Frechet}).
When $n=1$ and $\alpha_1=1$, it is well-known that $f$ is a smooth function.
When $n=1$ and $0<\alpha_1<1$, $f$ satisfying \eqref{ineq:semi-smooth} is known as a weakly smooth function. When $n=1$ and $\alpha_1 =0$, $f$ satisfying \eqref{ineq:semi-smooth} is a non-smooth function. Thus, the cases we consider cover all three cases: smooth, weakly-smooth, and non-smooth. For ease of reference, we termed a function satisfying the condition \eqref{ineq:semi-smooth} a semi-smooth function. The target distribution $\nu$ is assumed to satisfy LSI or PI, but $f$ can be non-convex. The class of distributions we consider have been studied in \citep{chatterji2020langevin,chewi2021analysis,NguDanChe21} for MCMC sampling.
One example of such a distribution is $\nu \propto \exp (- \|A_2 \sigma(A_1 x)-b\|^2 - \|x\|^2- \|x\| )$
for some full rank matrices $A_1, A_2$, vector $b$, and some activation (e.g., ReLU) function $\sigma$. This is a Bayesian regression problem.

% Clearly, when $\alpha=0$, \eqref{ineq:semi-smooth} reduces to a Lipschitz continuous condition, and when $\alpha=1$, it reduces to a smoothness condition.
% In the second setting, the potential is assumed to be composite as $f(x) = f_1(x)+f_2(x)$ with $f_1$ being convex and smooth  and $f_2$ being convex and semi-smooth .

% \chen{should we change $g$ to $f$?} 
In this work, we aim to develop a proximal algorithm for sampling from non-convex potentials that satisfy \eqref{ineq:semi-smooth}. Our method is built on the alternating sampling framework (ASF) introduced in \citet{lee2021structured}.
% which is a generic framework for sampling from a distribution $\pi^X(x) \propto \exp(-g(x))$. 
Unlike most existing sampling algorithms that require the potential to be smooth, ASF is applicable to semi-smooth problems. 
% The ASF with target distribution $\pi^X(x) \propto \exp(-f(x))$ works as follows.
Initialized at $x_0\sim \rho_0^X$, ASF with target distribution $\pi^X(x) \propto \exp(-f(x))$ and stepsize $\eta>0$ performs the two alternating steps as follows.
\begin{algorithm}[H]
	\caption{Alternating Sampling Framework \citep{lee2021structured}}
	\label{alg:ASF}
	\begin{algorithmic}
		\STATE 1. Sample $y_{k}\sim \pi^{Y|X}(y\mid x_k) \propto \exp[-\frac{1}{2\eta}\|x_k-y\|^2]$
		\STATE 2. Sample $x_{k+1}\sim \pi^{X|Y}(x \mid y_{k}) \propto \exp[-f(x)-\frac{1}{2\eta}\|x-y_{k}\|^2]$
	\end{algorithmic}
\end{algorithm}

The ASF is a special instance of the Gibbs sampling \citep{GemGem84} from
    \begin{equation}
        \pi(x,y) \propto \exp\left(-f(x)-\frac{1}{2\eta}\|x-y\|^2\right).
    \end{equation}
In Algorithm \ref{alg:ASF}, sampling $y_k$ given $x_k$ in step 1 is easy since $\pi^{Y|X}(y\mid x_k) = {\cal N}(x_k,\eta I)$ is an isotropic Gaussian distribution.
Sampling $x_{k+1}$ given $y_k$ in step 2 is however a highly nontrivial task; it corresponds to the restricted Gaussian oracle for $f$ \citep{lee2021structured}, defined as follows.

\begin{definition}
Given a point $y\in \R^d$ and stepsize $\eta >0$, the restricted Gaussian oracle (RGO) for $f:\R^d\to \R$ is a sampling oracle that returns a random sample from a distribution proportional to $\exp(-f(\cdot) - \|\cdot-y\|^2/(2\eta))$.
\end{definition}

The RGO can be viewed as the sampling counterpart of the proximal map
        % ${\rm Prox}_{\eta f}(y): = \argmin_{x} \left\{f(x)+\frac{1}{2\eta}\|x-y\|^2\right\}$
in optimization that is widely used in proximal algorithms for optimization \citep{ParBoy14}. The ASF is an idealized algorithm; an efficient implementation of the RGO is crucial to use this framework in practice.
% RGO is a key algorithmic ingredient used in \cite{lee2021structured} together with the alternating sampling framework to improve the iteration-complexity bounds for various sampling algorithms.
% To use the ASF in practice, one needs to efficiently implement the RGO. 
For some special cases of $f$, the RGO admits a computationally efficient realization \citep{mou2022efficient,shen2020composite,LiaChe21}.
% These instances of $f$ have simple coordinate-separable structures, $\ell_1$-norm, and group Lasso. 
For general $f$, especially semi-smooth ones considered in this work, it was not clear how to realize the RGO efficiently.
% {\chen It would be helpful for the reader if we state explicitly the complexity of the outer iterations as a theorem.}

Under the assumption that the RGO in the ASF can be efficiently realized, the ASF exhibits surprising convergence properties. It was firstly established in \citet{lee2021structured} that, when $f$ is strongly convex, Algorithm \ref{alg:ASF} converges linearly. This result is further improved recently in \citet{CheCheSalWib22} under various weaker assumptions on the target distribution $\pi^X \propto \exp(-f)$. We summarize below several convergence results established in \citet{CheCheSalWib22} that will be used.
% in this paper, under the assumption that $\pi^X$ satisfies LSI or PI.

Throughout the paper, we abuse notation by identifying a probability measure with its density w.r.t.
Lebesgue measure.
To this end, for two probability distributions $\rho \ll \nu$, we denote by 
    \[
        H_\nu(\rho) := \int \rho \log\frac{\rho}{\nu},\quad \chi_\nu^2 (\rho):= \int \frac{\rho^2}{\nu}-1, \quad R_{q,\nu}(\rho): = \frac{1}{q-1}\log\int \frac{\rho^q}{\nu^{q-1}}
    \]
the {\em KL divergence}, the {\em Chi-squared divergence}, and the {\em R\'enyi divergence}, respectively. Note that $R_{2,\nu}= \log (1+\chi_\nu^2)$, $R_{1,\nu} = H_\nu$, and $R_{q,\nu}\le R_{q',\nu}$ for any $1\le q \le q'<\infty$ \citep{Ren61,vempala2019rapid}.
We denote by $W_2$ the Wasserstein-2 distance \citep{Vil21}.
%     \[
%         W_2^2(\nu,\rho) := \min_{\gamma\in \Pi(\nu,\rho)}\int \|x-y\|^2 {\mathrm d}\gamma,
%     \]
% where $\Pi(\nu,\rho)$ represents the set of all couplings between $\nu$ and $\rho$, i.e., joint distributions with marginals $\nu$ and $\rho$.
Recall a probability distribution $\nu$ satisfies LSI with constant $C_{\rm LSI} > 0$ ($1/C_{\rm LSI}$-LSI) if for every $\rho$,
$H_\nu(\rho) \le \frac{C_{\rm LSI}}{2} J_\nu (\rho)$,
% \begin{equation}\label{eq:lsi}
%     H_\nu(\rho) \le \frac{C_{\rm LSI}}{2} J_\nu (\rho), \quad \forall \rho,
% \end{equation}
where the Fisher information $J_\nu(\rho)$ is defined as
    % \begin{equation}\label{eq:fisher}
$J_\nu(\rho) = \mE_\rho [\|\nabla \log \frac{\rho}{\nu}\|^2]$.
    % \end{equation}
A probability distribution $\nu$ satisfies PI with constant $C_{\rm PI} > 0$ ($1/C_{\rm PI}$-PI) if for any smooth  bounded function $\psi : \R^d\to\R$, we have
% \begin{equation}\label{eq:pi}
$\mE_\nu[(\psi-\mE_\nu(\psi))^2]
    \le C_{\rm PI} \mE_\nu[\norm{\nabla \psi}^2]$.

\begin{theorem}[{\citep[Theorem 3]{CheCheSalWib22}}]\label{thm:outer}
Assume that $\pi^X \propto \exp(-f)$ satisfies $\lambda$-LSI.
For any initial distribution $\rho_0^X$, the $k$-th iterate $\rho_k^X$ of Algorithm \ref{alg:ASF} with step size $\eta>0$ satisfies
\[
    H_{\pi^X}(\rho_k^X) \le \frac{H_{\pi^X}(\rho_0^X)}{{(1 + \lam \eta)}^{2k}}\,.
\]
Moreover, for all $q \ge 1$,
\[
    R_{q,\pi^X}(\rho_k^X) \le \frac{R_{q,\pi^X}(\rho_0^X)}{{(1 + \lam \eta)}^{2k/q}}\,.
\]
\end{theorem}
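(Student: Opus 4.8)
The plan is to analyze a single iteration of Algorithm~\ref{alg:ASF} on the augmented space $\R^d\times\R^d$ and then iterate. Write $\pi=\pi^{X,Y}$ for the joint target, set $\gamma_\eta={\cal N}(0,\eta I)$, and record the three structural facts that drive everything: (i) the $Y$-marginal is $\pi^Y=\pi^X*\gamma_\eta$, the Gaussian smoothing of the strongly-log-concave $\pi^X$; (ii) the conditional $\pi^{Y\mid X}(\cdot\mid x)={\cal N}(x,\eta I)$; and (iii) the conditional $\pi^{X\mid Y}(\cdot\mid y)\propto\exp(-g-\|\cdot-y\|^2/(2\eta))$ is $(\lam+\eta^{-1})$-strongly log-concave, uniformly in $y$. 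In these terms one ASF step carries $\rho_k^X$ to $\rho_{k+1}^X$ in two half-steps: a \emph{forward} half-step that attaches the Gaussian kernel, $\rho_k^X\mapsto\rho_k^X\otimes\pi^{Y\mid X}$, and marginalizes to $\rho_k^Y=\rho_k^X*\gamma_\eta$; and a \emph{backward} half-step that attaches $\pi^{X\mid Y}$, $\rho_k^Y\mapsto\rho_k^Y\otimes\pi^{X\mid Y}$, and marginalizes to $\rho_{k+1}^X$.

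Two ``free'' facts reduce the problem to quantifying each half-step. First, attaching a copy of the \emph{target's own} conditional does not change the divergence: since $\int K(x,y)\,\rd y=1$ one has $R_{q,\pi}(\rho_k^X\otimes\pi^{Y\mid X})=R_{q,\pi^X}(\rho_k^X)$ and $R_{q,\pi}(\rho_k^Y\otimes\pi^{X\mid Y})=R_{q,\pi^Y}(\rho_k^Y)$ (tensorization of R\'enyi divergence along a common kernel; for $q=1$ this is the chain rule for KL). Second, marginalization is a Markov channel, so by the data-processing inequality $R_{q,\pi^X}(\rho_{k+1}^X)\le R_{q,\pi}(\rho_k^Y\otimes\pi^{X\mid Y})$ and $R_{q,\pi^Y}(\rho_k^Y)\le R_{q,\pi}(\rho_k^X\otimes\pi^{Y\mid X})$. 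Together these already yield monotonicity $R_{q,\pi^X}(\rho_{k+1}^X)\le R_{q,\pi^X}(\rho_k^X)$; the real content is to sharpen each of the two bounds into a strict contraction by the factor $(1+\lam\eta)^{-1/q}$, after which chaining $2k$ half-steps produces the exponent $2k/q$ (and letting $q\to1$ recovers the KL bound, which is the $q=1$ special case of the R\'enyi one).

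For the forward half-step I would use that $\rho_k^X\mapsto\rho_k^Y=\rho_k^X*\gamma_\eta$ is exactly the time-$\eta$ heat flow run simultaneously on $\rho_k^X$ and on the reference $\pi^X$ (which flows to $\pi^Y$). The key sub-lemma is that Gaussian convolution preserves strong log-concavity with a degraded modulus: $\pi^X*\gamma_t$ is $\lam/(1+\lam t)$-strongly log-concave (a Hessian computation, or Brascamp--Lieb). Differentiating $R_{q,\pi^X*\gamma_t}(\rho_k^X*\gamma_t)$ in $t$ along the heat flow produces a relative-Fisher-information dissipation term; bounding it below by the Poincar\'e/log-Sobolev inequality satisfied by the $\lam/(1+\lam t)$-strongly-log-concave measure $\pi^X*\gamma_t$, and using $\int_0^\eta \lam/(1+\lam t)\,\rd t=\log(1+\lam\eta)$, yields $R_{q,\pi^Y}(\rho_k^Y)\le(1+\lam\eta)^{-1/q}R_{q,\pi^X}(\rho_k^X)$ — tight already in the Gaussian case, a useful sanity check.

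The backward half-step is where I expect the main difficulty, since data processing alone gives only monotonicity and all the gain must come from the $(\lam+\eta^{-1})$-strong log-concavity of $\pi^{X\mid Y}$. The clean way to exhibit the contraction is to recognize $\rho_k^Y\mapsto\rho_{k+1}^X$ as the time-$\eta$ \emph{reverse} heat flow: the denoising diffusion $\rd\overleftarrow{X}_s=\nabla\log(\pi^X*\gamma_{\eta-s})(\overleftarrow{X}_s)\,\rd s+\rd B_s$, started from $\rho_k^Y$, which transports $\pi^Y$ back to $\pi^X$. By the same convolution lemma its drift has Hessian $\preceq-\lam/(1+\lam(\eta-s))\,I$, so a synchronous coupling contracts pairwise distances by $\exp\!\big(-\!\int_0^\eta\lam/(1+\lam t)\,\rd t\big)=(1+\lam\eta)^{-1}$, i.e.\ a $W_2$-contraction by that factor; the nondegenerate noise $\rd B_s$ then upgrades this to $R_{q,\pi^X}(\rho_{k+1}^X)\le(1+\lam\eta)^{-1/q}R_{q,\pi^Y}(\rho_k^Y)$ (for $q=1$ one may instead run a direct entropy-dissipation computation along the reverse flow). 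The delicate points are verifying the sharp log-concavity-preservation constant and, above all, converting the pathwise/$W_2$ contraction into a divergence contraction with the exact exponent; granting both half-step contractions, chaining over the $k$ iterations of Algorithm~\ref{alg:ASF} gives $R_{q,\pi^X}(\rho_k^X)\le R_{q,\pi^X}(\rho_0^X)/(1+\lam\eta)^{2k/q}$ and, at $q=1$, the claimed KL estimate.
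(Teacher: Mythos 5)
First, a point of reference: the paper does not prove Theorem \ref{thm:outer} at all — it is imported verbatim from \cite[Theorem 3]{CheCheSalWib22} — so there is no internal proof to compare against; the relevant benchmark is the argument in that reference. Your proposal reconstructs its architecture faithfully: the two-half-step decomposition of one ASF iteration, the exact identity from attaching the target's own conditional kernel, the data-processing inequality for marginalization, and the simultaneous-heat-flow analysis of the forward step with the degraded modulus $\lam/(1+\lam t)$ and the integral $\int_0^\eta \lam/(1+\lam t)\,\rd t=\log(1+\lam\eta)$. The forward half-step as you describe it is essentially the proof in the reference.

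The genuine gap is in your backward half-step. A synchronous-coupling $W_2$-contraction by $(1+\lam\eta)^{-1}$ along the reverse SDE, even with nondegenerate driving noise, does not ``upgrade'' to a R\'enyi (or KL) contraction by $(1+\lam\eta)^{-1/q}$: a bijective contraction applied to both $\rho$ and the reference leaves every $f$-divergence unchanged, and there is no standard implication from Wasserstein contraction of a stochastic map to divergence contraction with a matching rate — any conversion via reverse transport or shift-Harnack inequalities would introduce extra, typically dimension-dependent, terms and would not reproduce the clean exponent. The correct route — which you parenthetically mention only for $q=1$ — works for all $q\ge 1$ and mirrors the forward step: both $\rho_k^Y$ and $\pi^Y$ are evolved by the \emph{identical} reverse Fokker--Planck equation (same drift $\nabla\log(\pi^X*\gamma_{\eta-s})$, same unit diffusion), so the drift contributions cancel in $\frac{\rd}{\rd s}R_q$ and one is left with a R\'enyi--Fisher-information dissipation term; this is bounded below via the log-Sobolev inequality of the intermediate law $\pi^X*\gamma_{\eta-s}$, which is $\lam/(1+\lam(\eta-s))$-strongly log-concave, and the same integral yields the factor $(1+\lam\eta)^{-1/q}$. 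With that replacement of the coupling argument by the dissipation computation, your outline matches the cited proof.
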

% \end{equation}
\begin{theorem}[{\citep[Theorem 4]{CheCheSalWib22}}]\label{Thm:Poincare}
Assume $\pi^X \propto \exp(-f)$ satisfies $\lam$-PI.
For any initial distribution $\rho_0^X$, the $k$-th iterate $\rho_k^X$ of Algorithm \ref{alg:ASF} with step size $\eta>0$ satisfies
\[
    \chi_{\pi^X}^2(\rho^X_k) \le \frac{\chi_{\pi^X}^2(\rho^X_0)}{{(1 + \lam \eta)}^{2k}}\,.
\]
Moreover, for all $q \ge 2$,
\[
    R_{q,\pi^X}(\rho^X_k) \le \begin{cases} R_{q,\pi^X}(\rho_0^X) - \frac{2k \log(1+\lam\eta)}{q}\,, &\text{if}~k\le \frac{q}{2\log(1+\lam\eta)} \, (R_{q,\pi^X}(\rho_0^X) - 1)\,, \\
    1/{(1+\lam\eta)}^{2(k-k_0)/q}\,, &\text{if}~k \ge k_0 := \lceil \frac{q}{2\log(1+\lam\eta)} \, (R_{q,\pi^X}(\rho_0^X) - 1)\rceil\,.
    \end{cases}
\]
\end{theorem}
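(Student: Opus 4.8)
The plan is to reduce Theorem~\ref{Thm:Poincare} to a one-step contraction estimate for the ASF and then iterate. One step $\rho_k^X\mapsto\rho_{k+1}^X$ factors as a \emph{forward} half-step $\rho_k^X\mapsto\rho_k^Y:=\rho_k^X*{\cal N}(0,\eta I)$ (step~1, the Gaussian marginal) followed by a \emph{backward} half-step $\rho_k^Y\mapsto\rho_{k+1}^X$ (step~2, the RGO). Writing $\pi=\pi(x,y)\propto\exp(-g(x)-\|x-y\|^2/(2\eta))$ for the joint density and $\pi^Y=\pi^X*{\cal N}(0,\eta I)$ for its $Y$-marginal, a direct computation (in which the conditionals cancel) shows that in terms of likelihood ratios the map $\rho^X=g\,\pi^X\mapsto\rho^Y$ is $g\mapsto P_1^\dagger g$ with $(P_1^\dagger g)(y)=\mE_\pi[g(X)\mid Y=y]$, and the RGO step $\rho^Y=h\,\pi^Y\mapsto\rho^{X,+}$ is $h\mapsto P_1 h$ with $(P_1h)(x)=\mE_\pi[h(Y)\mid X=x]$. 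These two operators are adjoints between $L^2(\pi^X)$ and $L^2(\pi^Y)$, so the full ASF step acts on densities as the self-adjoint, positive-semidefinite operator $P=P_1P_1^\dagger$ on $L^2(\pi^X)$, and $\|P\|_{L^2_0(\pi^X)}=\|P_1^\dagger\|_{L^2_0(\pi^X)\to L^2_0(\pi^Y)}^2$, where $L^2_0$ denotes mean-zero functions.

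For the $\chi^2$ bound I would first prove that the \emph{forward} half-step contracts: $\chi^2_{\pi^Y}(\rho^Y)\le(1+\lam\eta)^{-1}\chi^2_{\pi^X}(\rho^X)$. Interpolate with the heat flow $\pi_t:=\pi^X*{\cal N}(0,tI)$, $\rho_t:=\rho^X*{\cal N}(0,tI)$ for $t\in[0,\eta]$; both solve the heat equation, and a standard computation gives $\frac{d}{dt}\chi^2_{\pi_t}(\rho_t)=-\mE_{\pi_t}[\norm{\nabla(\rho_t/\pi_t)}^2]$ (up to the normalization of the heat semigroup). Two ingredients finish it: (i) $\pi_t$ satisfies the Poincar\'e inequality with the improved constant $\lam_t=\lam/(1+\lam t)$ — by conditioning, for $A\sim\pi^X$ and $B\sim{\cal N}(0,tI)$ independent, the law of total variance plus the $\tfrac1t$-PI of ${\cal N}(0,tI)$ controls the conditional part, and the $\lam$-PI of $\pi^X$ applied to the heat-smoothed function $Q_tf$ together with $\nabla Q_tf=Q_t\nabla f$ and Jensen controls the remaining part, giving $\mathrm{Var}_{\pi_t}(f)\le(t+1/\lam)\mE_{\pi_t}[\norm{\nabla f}^2]$; (ii) hence $\frac{d}{dt}\chi^2_{\pi_t}(\rho_t)\le-\lam_t\,\chi^2_{\pi_t}(\rho_t)$, and Gr\"onwall with $\int_0^\eta\lam_t\,dt=\log(1+\lam\eta)$ yields the claimed factor. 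Combined with the adjoint structure above, $\|P_1^\dagger\|_{L^2_0}^2\le(1+\lam\eta)^{-1}$, so $\|P\|_{L^2_0}\le(1+\lam\eta)^{-1}$, and since $\rho_{k+1}^X/\pi^X-1=P(\rho_k^X/\pi^X-1)$ we get $\chi^2_{\pi^X}(\rho_{k+1}^X)=\norm{P(\rho_k^X/\pi^X-1)}_{\pi^X}^2\le(1+\lam\eta)^{-2}\chi^2_{\pi^X}(\rho_k^X)$; iterating over $k$ gives the first claim.

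For the R\'enyi bound ($q\ge2$) I would follow the same decomposition. Along the heat-flow interpolation, with $G_q:=\mE_{\pi_t}[(\rho_t/\pi_t)^q]=e^{(q-1)R_{q,\pi_t}(\rho_t)}$, one computes (with the same normalization) $\frac{d}{dt}R_{q,\pi_t}(\rho_t)=-\frac{2}{q\,G_q}\mE_{\pi_t}[\norm{\nabla(\rho_t/\pi_t)^{q/2}}^2]$; applying the $\lam_t$-PI of $\pi_t$ to $v=(\rho_t/\pi_t)^{q/2}$ and bounding $(\mE_{\pi_t}v)^2\le G_q\,e^{-R_{q,\pi_t}(\rho_t)}$ via log-convexity of $p\mapsto\log\mE_{\pi_t}[(\rho_t/\pi_t)^p]$ (here is where $q\ge2$ enters) gives the Vempala--Wibisono-type inequality $\frac{d}{dt}R_{q,\pi_t}(\rho_t)\le-\frac{2\lam_t}{q}\bigl(1-e^{-R_{q,\pi_t}(\rho_t)}\bigr)$: a (nearly) linear decrease at rate $\approx\frac{2\lam_t}{q}$ while $R_q\gtrsim1$ and an exponential decrease at rate $\frac{2\lam_t}{q}$ once $R_q\lesssim1$. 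The backward (RGO) half-step does not increase $R_q$, by the data-processing inequality for R\'enyi divergence. Using $\int_0^\eta\lam_t\,dt=\log(1+\lam\eta)$ and summing over $k$, the number of steps needed to drive $R_q$ from $R_q(\rho_0^X)$ down to the crossover level $1$ is $k_0\approx\frac{q\,(R_q(\rho_0^X)-1)}{2\log(1+\lam\eta)}$, before which $R_q$ decreases by $\frac{2\log(1+\lam\eta)}{q}$ per step and after which it contracts by $(1+\lam\eta)^{-2/q}$ per step — precisely the two cases in the statement.

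The main obstacle is the forward (heat-flow) half-step: making the derivative identities for $\chi^2_{\pi_t}(\rho_t)$ and $R_{q,\pi_t}(\rho_t)$ rigorous (integrability, differentiation under the integral along the heat flow, density of nice test functions) and establishing the sharp improvement $\lam_t=\lam/(1+\lam t)$ of the Poincar\'e constant under Gaussian convolution. Once these are in place the $\chi^2$ case is immediate from the adjoint/positive-semidefinite structure, while the R\'enyi case additionally requires the somewhat delicate two-regime bookkeeping of constants (including the appearance of $k_0$) in the spirit of R\'enyi-divergence decay under a Poincar\'e inequality, together with the observation that — in contrast to the strongly-log-concave and KL results — the backward half-step is used here only through data processing, so no strong log-concavity of the RGO target is needed.
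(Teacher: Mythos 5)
First, a point of orientation: the paper does not prove Theorem~\ref{Thm:Poincare} at all --- it is imported verbatim from \cite[Theorem 4]{CheCheSalWib22} --- so your proposal can only be judged against the argument in that reference. Your overall architecture (forward heat-flow half-step / backward RGO half-step, the simultaneous-heat-flow interpolation $\rho_t,\pi_t$, the improved Poincar\'e constant $\lam_t=\lam/(1+\lam t)$ via the law of total variance and $\nabla Q_t f = Q_t\nabla f$, and $\int_0^\eta\lam_t\,\rd t=\log(1+\lam\eta)$) is exactly the right one, and your treatment of the $\chi^2$ bound is correct and complete in outline: the adjoint identity $\|P\|_{L^2_0}=\|P_1^\dagger\|_{L^2_0}^2$ hands you the backward contraction for free, yielding $(1+\lam\eta)^{-2}$ per full iteration.

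The R\'enyi part, however, has a genuine quantitative gap. Your differential inequality for the forward half-step, $\frac{\rd}{\rd t}R_q\le-\frac{2\lam_t}{q}(1-e^{-R_q})$, has $1-e^{-R_q}<1$ always, and $1-e^{-R_q}\approx 1-e^{-1}\approx 0.63$ near the crossover $R_q=1$. So a single forward half-step decreases $R_q$ by at most $\frac{2\log(1+\lam\eta)}{q}$ and in the worst case only by about $\frac{2(1-e^{-1})\log(1+\lam\eta)}{q}$; if the backward half-step is handled \emph{only} by data processing (non-increase), the per-iteration decrease you obtain is $\frac{2(1-e^{-1})\log(1+\lam\eta)}{q}$, which falls short of the stated $\frac{2\log(1+\lam\eta)}{q}$, and likewise the second-regime contraction comes out as $(1+\lam\eta)^{-2(1-e^{-1})/q}$ rather than $(1+\lam\eta)^{-2/q}$. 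This changes only constants in the complexity, but it does not prove the theorem as stated. The missing idea --- and the structural heart of \cite{CheCheSalWib22} --- is that the backward (RGO) half-step is \emph{itself} a simultaneous heat flow run in reversed time, so the same R\'enyi differential inequality applies to it; summing the two half-steps gives a per-iteration decrease of $\frac{4(1-e^{-1})\log(1+\lam\eta)}{q}\ge\frac{2\log(1+\lam\eta)}{q}$ (since $4(1-e^{-1})\approx 2.53$), and similarly the product of the two half-step contraction factors dominates $(1+\lam\eta)^{-2/q}$. The Hilbert-space duality that rescued you in the $\chi^2$ case has no analogue for $R_q$ with $q>2$, which is precisely why the time-reversal interpretation of the backward step is needed there.
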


\begin{theorem}[{\citep[Theorem 2]{CheCheSalWib22}}]\label{thm:cvx}
Assume that $\pi^X \propto \exp(-f)$ is log-concave.
For any initial distribution $\rho_0^X$, the $k$-th iterate $\rho_k^X$ of Algorithm \ref{alg:ASF} with step size $\eta>0$ satisfies
\[
    H_{\pi^X}(\rho_k^X) \le \frac{W_2^2(\rho_0^X,\pi^X)}{k\eta}\,.
\]
\end{theorem}

To use the ASF for sampling problems, we need to realize the RGO with efficient implementations. In the rest of this paper, we develop an efficient algorithm for RGO associated with a potential satisfying \eqref{ineq:semi-smooth}, and then combine it with the ASF to establish a proximal algorithm for sampling. The complexity of the proximal algorithm can be obtained by combining the above convergence results for ASF and the complexity results we establish for RGO. The rest of the paper is organized as follows. In Section~\ref{sec:semismooth} we consider a special case of \eqref{ineq:semi-smooth} with $n=1$, develop an efficient implementation for RGO via rejection sampling, and establish complexity results for sampling from distributions with non-convex and semi-smooth potentials.
In Section~\ref{sec:multiple}, we extend the aforementioned complexity results to the general cases \eqref{ineq:semi-smooth}. 
In Section~\ref{sec:convex}, we specialize \eqref{ineq:semi-smooth} to the case where $f$ is convex.
% and establish two complexity results using slightly different approaches.
% In Section~\ref{sec:inexact}, we show that various well-known sampling algorithms are indeed variants of approximate implementations of ASF.
In Section~\ref{sec:numericcs}, we present preliminary computational results to demonstrate the efficacy of the proximal sampling algorithm.
In Section~\ref{sec:conclusion}, we present some concluding remarks.
% and possible extensions.
Finally, in Appendices~\ref{sec:technical}-\ref{sec:LMC}, we present technical results and proofs omitted in the paper and provide a self-contained discussion on solving a subproblem in the proximal sampling algorithm.
% and comment on the proof techniques used in this paper compared with those of \cite{LiaChe21,LiaChe22}.

\section{Proximal sampling for non-convex and semi-smooth potentials}\label{sec:semismooth}

Our main objective in this section is to establish complexity results for sampling from distributions with non-convex and semi-smooth potentials satisfying \eqref{ineq:semi-smooth} with $n=1$, i.e.,
\begin{equation}\label{ineq:semi-smoothn1}
    \|f'(u) - f'(v)\| \le L_{\alpha} \|u-v\|^{\alpha},\quad \forall u, v\in \R^d.
    \end{equation}
We refer to such a function an $L_\alpha$-$\alpha$-semi-smooth function.
% To better present our results, we begin with this simple case of \eqref{ineq:semi-smooth}, and 
% The general setting involving composite potentials is discussed in Section \ref{sec:multiple}.
% In addition to the semi-smoothness of $f$, we assume that $\pi^X\propto \exp(-f)$ satisfies the PI.
% and discuss their corresponding complexity results in the rest of this section.

% \begin{theorem}\label{thm:LSI}
% Assume $f$ is $L_\alpha$-weakly smooth  with $\alpha>0$ and $\pi^X \propto \exp(-f)$ satisfies LSI with constant $C_{\rm LSI}$.  
% With initial distribution $\rho_0^X$ and stepsize $\eta \asymp 1/(L_\alpha^{\frac{2}{\alpha+1}} d)$, Algorithm \ref{alg:ASF} using Algorithm \ref{alg:RGO-nonconvex} as an RGO has the iteration-complexity bound 
%     \begin{equation}
%         \tilde {\cal O}\left(C_{\rm LSI}L_\alpha^\frac{2}{\alpha+1} d \right)
%     \end{equation}
% to achieve $\varepsilon$ error to the target $\pi^X$ in terms of KL divergence.
% Each iteration queries $\tilde {\cal O}(1)$ subgradients of $f$ and generates ${\cal O}(1)$ samples in expectation from Gaussian distribution.
% \end{theorem}
% \begin{proof}
% The result follows directly from Theorem \ref{thm:outer}, Proposition \ref{prop:opt} and Proposition \ref{prop:exp} with the choice of stepsize $\eta \asymp 1/(L_\alpha^\frac{2}{\alpha+1} d)$.
% \end{proof}

% \section{RGO for semi-smooth potentials} \label{sec:nonconvex}

The bottleneck of Algorithm \ref{alg:ASF} for sampling from a general distribution $\exp(-f)$ is an efficient realization of the RGO, i.e., step 2 of Algorithm \ref{alg:ASF}.
% We consider the setting where $f$ is a non-convex and semi-smooth function, and 
To address this issue, we develop an efficient algorithm for the corresponding RGO based on rejection sampling. We show that, with a carefully designed proposal and a sufficiently small $\eta$, the expected number of rejection sampling steps to obtain one effective sample in RGO turns out to be bounded above by a dimension-free constant.
The core to achieving such a constant bound on the expected rejection steps is a novel construction of proposal distribution that does not rely on the convexity of $f$ and a refined analysis that captures the nature of semi-smooth functions.
We utilize a useful property of semi-smooth functions that they can be approximated by smooth  functions to arbitrary accuracy, at the cost of increasing their smoothness parameters. 
% Intuitively, semi-smooth functions with an order of growth $\alpha+1\in [1,2]$ are dominated by smooth functions with quadratic growth.
This is formalized in the following lemma, of which the proof is postponed to Appendix \ref{sec:proof}. Relevant ideas have been explored in \citet{devolder2014first,nesterov2015universal} to design universal methods for convex semi-smooth optimization problems.

\begin{lemma}\label{lem:property}
Assume $f$ is an $L_\alpha$-$\alpha$-semi-smooth function, then for $\delta>0$ and every $u,v \in \R^d$
\begin{equation}\label{ineq:smooth }
    |f(u) - f(v) - \inner{f'(v)}{u-v}| \le \frac{M }{2}\|u -v\|^2+ \frac{(1-\alpha)\delta}{2},
\end{equation}
where
\begin{equation}\label{def:M}
    M=\frac{L_\alpha^{\frac{2}{\alpha+1}} }{ [(\alpha+1)\delta]^{\frac{1-\alpha}{\alpha+1}} }.
\end{equation}
\end{lemma}

% \begin{proof}
% It follows from the assumption that $f$ is $L_\alpha$-weakly smooth that for every $u,v \in \R^d$,
% \begin{equation}\label{ineq:semismooth }
%     |f(u) - f(v) - \inner{f'(v)}{u-v}| \le \frac{L_\alpha}{\alpha+1} \|u-v\|^{\alpha+1}.
% \end{equation}
% Using the Young's inequality $ab\le a^p/p + b^q/q$ with 
% 	\[
% 	a = \frac{L_\alpha}{(\alpha+1)\delta^{\frac{1-\alpha}{2}}}\|u-v\|^{\alpha+1}, \quad b= \delta^{\frac{1-\alpha}{2}}, \quad p= \frac{2}{\alpha+1}, \quad q= \frac{2}{1-\alpha}, 
% 	\]
% 	we obtain
% 	\[
% 	\frac{L_\alpha}{\alpha+1}\|u-v\|^{\alpha+1} \le
% 	\frac{L_\alpha^{\frac{2}{\alpha+1}} }{2 [(\alpha+1)\delta]^{\frac{1-\alpha}{\alpha+1}} }\|u -v\|^2
% 	+ \frac{(1-\alpha)\delta}{2}.
% 	\]
% Plugging the above inequality into \eqref{ineq:semismooth }, we have
% \[
% |f(u) - f(v) - \inner{f'(v)}{u-v}| \le \frac{L_\alpha^{\frac{2}{\alpha+1}} }{2 [(\alpha+1)\delta]^{\frac{1-\alpha}{\alpha+1}} }\|u -v\|^2
% 	+ \frac{(1-\alpha)\delta}{2}.
% \]
% This inequality and the definition of $M$ in \eqref{def:M} imply \eqref{ineq:smooth }.
% \end{proof}

% \chen{briefly discuss why we would like to solve this optimization for better transition. maybe inspired by \cite{LiaChe22}?} 

Our algorithm is inspired by \citet{LiaChe21}, which also uses rejection sampling for RGO. The proposal of rejection sampling used in \citet{LiaChe21} is a Gaussian distribution centered at an (approximate) minimizer of the regularized potential function.
We thus consider the regularized optimization problem
\begin{equation}\label{eq:fy-eta}
    \text{prox}_{\eta f}(y):= \argmin_{x\in\R^d}\left\{f_y^\eta(x):= f(x) + \frac{1}{2\eta}\|x-y\|^2\right\},
\end{equation}
where $y\in \R^d$ is given. Departing from the convex setting studied in \citet{LiaChe21}, when $f$ is non-convex and semi-smooth, \eqref{eq:fy-eta} may not be a convex optimization regardless of the value of $\eta$. Nevertheless, thanks to Lemma \ref{lem:property}, $f_y^\eta$ is close to a strongly convex and smooth function up to some approximation error, and \eqref{eq:fy-eta} can still be solved efficiently using convex smooth optimization algorithms such as Nesterov's acceleration.
% Even though $f$ is non-convex and semi-smooth, thanks to the strong regularization term $\frac{1}{2\eta}\|x-y\|^2$, $f_y^\eta$ is close to a strongly convex and smooth  function with some approximation error.
% As a result, \eqref{eq:fy-eta} can be solved by convex smooth  optimization algorithms, such as Nesterov's acceleration. 
We describe a variant of the method in Algorithm \ref{alg:AG} in Appendix \ref{sec:opt}.
The following proposition presents a complexity result of Algorithm \ref{alg:AG} for finding an approximate stationary point of $f_y^\eta$ with a small $\eta$. Its proof is postponed to Appendix \ref{sec:proof}.

\begin{proposition}\label{prop:opt}
Assume $\eta\le \frac{1}{Md}$, and let $w\in \R^d$ be an approximate stationary point of $f_y^\eta$, i.e.,
\begin{equation}\label{ineq:grad}
    \|s\|\le \sqrt{Md}, \quad s=f'(w)+\frac{1}{\eta}(w-y),
\end{equation}
where $M$ is as in \eqref{def:M}.
Then, the iteration-complexity to find $w$ with Algorithm \ref{alg:AG} is $\tilde{\cal O}(1)$.
\end{proposition}

% \begin{proof}
% It follows from Lemma \ref{lem:fyeta} that $f_y^\eta$ satisfies \eqref{ineq:theta} with \begin{equation}\label{eq:value}
%     \mu=\frac{1}{\eta}-M, \quad L = \frac{1}{\eta}+M, \quad \theta=\frac{(1-\alpha)\delta}2.
% \end{equation}
% Since $\eta\le 1/(Md)$, it is easy to verify that the assumption on $\rho$ (i.e., $\sqrt{Md}$ in our case) in Proposition \ref{prop:k0} is satisfied.
% Hence, it follows from Proposition \ref{prop:k0} and \eqref{eq:value} that the proposition holds.
% \end{proof}

% Our algorithm for RGO is based on rejection sampling. The next lemma gives two important functions that are useful for the analysis of rejection sampling.
With this approximate stationary point, we obtain the following key ingredients of our rejection sampling-based RGO.
% and shows that they bound $f_y^\eta$ in both directions.
\begin{lemma}\label{lem:h1h2}
Let $w^*\in \R^d$ be a stationary point of $f_y^\eta$, i.e.,
\begin{equation}\label{eq:stationary}
    f'(w^*) + \frac{1}{\eta}(w^*-y)=0,
\end{equation}
and $w$ be an approximate stationary point as in \eqref{ineq:grad}. Define
\begin{align}
    h_{1,y}^w(x)&:=f(w) + \inner{f'(w)}{x-w} - \frac{M}{2}\|x -w\|^2 + \frac{1}{2\eta}\|x -y\|^2 - \frac{(1-\alpha)\delta}{2}, \label{def:h1}\\
	h_{2,y}^{w^*}(x)&:=f(w^*) + \inner{f'(w^*)}{x-w^*} + \frac{M}{2}\|x -w^*\|^2 + \frac{1}{2\eta}\|x -y\|^2 + \frac{(1-\alpha)\delta}{2}, \label{def:h2}
\end{align}
Then, we have for every $x\in \R^d$,
\begin{equation}\label{ineq:h1h2}
    h_{1,y}^w(x) \le f_y^\eta(x)\le h_{2,y}^{w^*}(x).
\end{equation}
\end{lemma}

\begin{proof}
Inequalities in \eqref{ineq:h1h2} directly follow from \eqref{ineq:smooth } and the definitions of $h_{1,y}^w$ and $h_{2,y}^{w^*}$ in \eqref{def:h1} and \eqref{def:h2}, respectively.
\end{proof}

We are now ready to present the rejection sampling algorithm (Algorithm \ref{alg:RGO-nonconvex}) for RGO.

\begin{algorithm}[H]
	\caption{RGO Rejection Sampling}
	\label{alg:RGO-nonconvex}
	\begin{algorithmic}
	    \STATE 1. Compute an approximate solution $w$ satisfying \eqref{ineq:grad} with Algorithm \ref{alg:AG}
		\STATE 2. Generate sample $X\sim \exp(-h_{1,y}^w(x))$
		\STATE 3. Generate sample $U\sim {\cal U}[0,1]$
		\STATE 4. If 
		\[
		    U \leq \frac{\exp(-f_y^\eta(X))}{\exp(-h_{1,y}^w(X))},
		\]
		then accept/return $X$; otherwise, reject $X$ and go to step 2.
	\end{algorithmic}
\end{algorithm}

By construction, the proposal $\propto \exp(-h_{1,y}^w(x))$ is close to the target $\pi^{X|Y}(x\mid y) \propto \exp(-f_y^\eta(x))$ when $\eta$ is sufficiently small, and hence the expected number of rejection steps is small. The following proposition rigorously justifies this intuition.

\begin{proposition}\label{prop:exp}
Assume $f$ is $L_\alpha$-$\alpha$-semi-smooth and let $f_y^\eta$ be as in \eqref{eq:fy-eta}. Then $X$ generated by Algorithm \ref{alg:RGO-nonconvex} follows the distribution
$\pi^{X|Y}(x\mid y) \propto \exp\left(-f_y^\eta(x)\right)$.
Moreover, if
\begin{equation}\label{ineq:eta-new}
    \eta \le \frac{1}{Md} = \frac{[(\alpha+1)\delta]^{\frac{1-\alpha}{\alpha+1}}}{L_\alpha^{\frac{2}{\alpha+1}}d}\,,
\end{equation}
then the expected number of rejection steps in Algorithm \ref{alg:RGO-nonconvex} is at most $\exp\left(\frac{3(1-\alpha)\delta}2 + 3 \right)$.
\end{proposition}

\begin{proof}
It is well-known in rejection sampling $ X \sim \pi^{X|Y}(x|y)$. By definition, the probability that $X$ is accepted is
\[
\mathbb{P}\left(U \leq \frac{\exp(-f_y^\eta(X))}{\exp(-h_{1,y}^w(X))}\right) 
=\int \frac{\exp(-f_y^\eta(x))}{\exp(-h_{1,y}^w(x))} \frac{\exp(-h_{1,y}^w(x))}{\int \exp(-h_{1,y}^w(z)) \rd z} \rd x =\frac{\int \exp(-f_y^\eta(x)) \rd x}{\int \exp(-h_{1,y}^w(x)) \rd x}.
\]
Using the above identity, \eqref{ineq:h1h2}, and Lemma \ref{lem:int}, we have
\begin{align}
    &\mathbb{P}\left(U \leq \frac{\exp(-f_y^\eta(X))}{\exp(-h_{1,y}^w(X))}\right) \ge
    \frac{\int \exp(-h_{2,y}^{w^*}(x))\rd x}{\int \exp(-h_{1,y}^w(x))\rd x} \nn \\
    =& \left( \frac{1-\eta M}{1 + \eta M} \right)^{d/2} \frac{\exp\left( \frac{1}{2\eta}\|w^*\|^2 - f(w^*) + \inner{f'(w^*)}{w^*} - \frac{1}{2\eta}\|y\|^2 - \frac{1-\alpha}{2}\delta \right)}{\exp\left( \frac{1}{2\eta}\|w\|^2 + \frac{\eta}{2(1-\eta M)} \|s\|^2 - f(w) + \frac1\eta\inner{w}{y-w} - \frac{1}{2\eta}\|y\|^2 + \frac{1-\alpha}{2}\delta \right)} \nn \\
    =& \left( \frac{1-\eta M}{1 + \eta M} \right)^{d/2} \exp\left( - (1-\alpha)\delta - \frac{\eta}{2(1-\eta M)} \|s\|^2 \right) \nn \\ 
    & \times \exp\left( \frac{1}{2\eta}\|w^*\|^2 - \frac{1}{2\eta}\|w\|^2- f(w^*) + f(w) + \inner{f'(w^*)}{w^*} - \frac1\eta\inner{w}{y-w} \right). \label{ineq:prob}
\end{align}
It follows from 
\eqref{ineq:smooth } with $(u,v)=(w,w^*)$ that
\[
-f(w) + f(w^*) + \inner{f'(w^*)}{w-w^*} \le \frac{M }{2}\|w-w^*\|^2+ \frac{(1-\alpha)\delta}{2},
\]
which together with \eqref{eq:stationary} implies that
\begin{align*}
    &\frac{1}{2\eta}\|w^*\|^2 - \frac{1}{2\eta}\|w\|^2- f(w^*) + f(w) + \inner{f'(w^*)}{w^*} - \frac1\eta\inner{w}{y-w}\\
    =& \frac{1}{2\eta}\|w^*\|^2 - \frac{1}{2\eta}\|w\|^2- f(w^*) + f(w) + \inner{f'(w^*)}{w^*} - \frac1\eta\inner{w}{\eta f'(w^*) + w^* -w}\\
    =& \frac{1}{2\eta}\|w-w^*\|^2 - f(w^*) + f(w) + \inner{f'(w^*)}{w^*-w}\\
    \ge & \frac{1}{2\eta}\|w-w^*\|^2 - \frac{M }{2}\|w-w^*\|^2- \frac{(1-\alpha)\delta}{2}
    \ge  - \frac{(1-\alpha)\delta}{2},
\end{align*}
where the last inequality is due to \eqref{ineq:eta-new}.
Plugging the above inequality into \eqref{ineq:prob}, we obtain
\[
\mathbb{P}\left(U \leq \frac{\exp(-f_y^\eta(X))}{\exp(-h_{1,y}^w(X))}\right)
\ge \left( \frac{1-\eta M}{1 + \eta M} \right)^{d/2} \exp\left( - \frac{3(1-\alpha)\delta}2 - \frac{\eta}{2(1-\eta M)} \|s\|^2 \right).
\]
Hence, using the above bound, \eqref{ineq:eta-new} and \eqref{ineq:grad}, we arrive at the following bound on the expected number of rejection iterations
\begin{align*}
    &\frac{1}{\mathbb{P}\left(U \leq \frac{\exp(-f_y^\eta(X))}{\exp(-h_{1,y}^w(X))}\right)}
    \le \left( \frac{1+\eta M}{1 - \eta M} \right)^{d/2} \exp\left(\frac{3(1-\alpha)\delta}2 + \frac{\eta}{2(1-\eta M)} \|s\|^2 \right) \\
    \le & \left( 1+\frac{2\eta M}{1 - \eta M} \right)^{d/2} \exp\left(\frac{3(1-\alpha)\delta}2 + \eta \|s\|^2 \right) 
    \le \left( 1+ 4\eta M \right)^{d/2} \exp\left(\frac{3(1-\alpha)\delta}2 +  \frac{\|s\|^2}{Md} \right)\\
    \le & \left( 1+ \frac 4d \right)^{d/2} \exp\left(\frac{3(1-\alpha)\delta}2 + 1 \right) 
    \le \exp\left(\frac{3(1-\alpha)\delta}2 + 3 \right).
\end{align*}
\end{proof}

% We end this section by discussing the role played by $\delta$ in the complexity analysis. 
Note that $\delta$ is a tunable parameter. Choosing a large $\delta$ makes $M$ small and $\eta$ large in view of \eqref{def:M} and \eqref{ineq:eta-new}, respectively. Such a choice results in a better complexity of ASF but a larger number of expected rejection steps in RGO. 
% We set $\delta=1$ in all our experiments. 
Combining Proposition \eqref{prop:exp} and the convergence results of ASF we obtain the following non-asymptotic complexity bound to sample from non-convex semi-smooth potentials. This complexity bound is better than all existing results when $\alpha \in [0, 1)$, see Table \ref{tab:t2} for comparison. 
\begin{theorem}\label{thm:PI}
Assume $f$ is $L_\alpha$-$\alpha$-semi-smooth and $\pi^X \propto \exp(-f)$ satisfies PI with constant $C_{\rm PI}$.
With initial distribution $\rho_0^X$ and stepsize $\eta \asymp 1/(L_\alpha^{\frac{2}{\alpha+1}} d)$, Algorithm \ref{alg:ASF} using Algorithm \ref{alg:RGO-nonconvex} as an RGO has the iteration-complexity bound 
    \begin{equation}
        \tilde{\cal O}\left(C_{\rm PI}L_\alpha^\frac{2}{\alpha+1} d \log \chi_{\pi^X}^2(\rho^X_0)\right)
    \end{equation}
to achieve $\varepsilon$ error to the target $\pi^X$ in terms of Chi-squared divergence, and 
    \begin{equation}
        \tilde {\cal O}\left(C_{\rm PI} L_\alpha^\frac{2}{\alpha+1} qd R_{q,\pi^X}(\rho_0^X)\right),\quad q\ge 2
    \end{equation}
to achieve $\varepsilon$ error in R\'enyi divergence $R_{q,\pi^X}$.
Each iteration queries $\tilde {\cal O}(1)$ subgradients of $f$.
% and generates ${\cal O}(1)$ samples in expectation from Gaussian distribution.
\end{theorem}
\begin{proof}
The result is a direct consequence of Theorem \ref{Thm:Poincare}, Proposition \ref{prop:opt} and Proposition \ref{prop:exp} with the choice of stepsize $\eta \asymp 1/(L_\alpha^\frac{2}{\alpha+1} d)$.
\end{proof}
% {\chen add a brief discussion}
\begin{remark}
When the coordinate is scaled by $\gamma$ as $x \rightarrow \gamma x$, by definition, the semi-smoothness coefficient becomes $\gamma^{\alpha+1}L_\alpha$ and the PI constant becomes $C_{\rm PI}/\gamma^2$. By Proposition \ref{prop:exp}, to attain ${\cal O}(1)$ complexity for the RGO, the stepsize should be $\eta \le \frac{[(\alpha+1)\delta]^{\frac{1-\alpha}{\alpha+1}}}{(\gamma^{\alpha+1}L_\alpha)^{\frac{2}{\alpha+1}}d} = \frac{[(\alpha+1)\delta]^{\frac{1-\alpha}{\alpha+1}}}{\gamma^2L_\alpha^{\frac{2}{\alpha+1}}d}$. Thus, applying Theorem \ref{Thm:Poincare}, the total complexity remains the same as that without coordinate scaling. 
\end{remark}

\section{Proximal sampling for non-convex composite potentials} \label{sec:multiple}

This section is devoted to the sampling from distributions with non-convex composite potential $f$ satisfying \eqref{ineq:semi-smooth}. Results presented in this section generalize those in Section \ref{sec:semismooth}, which are for the setting with $n=1$.
Although Section \ref{sec:semismooth} is developed for the simple case where $n=1$ in \eqref{ineq:semi-smooth}, the proof techniques apply to the general case of \eqref{ineq:semi-smooth}. Hence, to avoid duplication, we present the following results analogous to those in Section \ref{sec:semismooth} without giving proofs.

% \[
% |f(u) - f(v) - \inner{f'(v)}{u-v}| \le \sum_{i=1}^n \frac{L_{\alpha_i}}{\alpha_i+1} \|u-v\|^{\alpha_i+1}.
% \]

The following lemma is a direct generalization of Lemma \ref{lem:property}, which shows that functions satisfying \eqref{ineq:semi-smooth} can be approximated by smooth  functions up to some controllable approximation errors.
\begin{lemma}\label{lem:approximation}
Assume $f$ satisfies \eqref{ineq:semi-smooth}, then for any $\delta>0$, we have
\begin{equation}\label{ineq:smooth -n}
    |f(u) - f(v) - \inner{f'(v)}{u-v}| \le  \frac{M_n}{2}\|u -v\|^2+ \sum_{i=1}^n \frac{(1-\alpha_i)\delta}{2}, \quad \forall u,v \in \R^d,
\end{equation}
where
\begin{equation}\label{def:M-n}
    M_n= \sum_{i=1}^n \frac{L_{\alpha_i}^{\frac{2}{\alpha_i+1}} }{ [(\alpha_i+1)\delta]^{\frac{1-\alpha_i}{\alpha_i+1}} }.
\end{equation}
\end{lemma}

% \begin{align}
%     h_1(x)&:=f(w) + \inner{f'(w)}{x-w} -  \frac{M}{2}\|x -w\|^2 + \frac{1}{2\eta}\|x -y\|^2 - \sum_{i=1}^n \frac{(1-\alpha_i)\delta}{2}, \label{def:h1-n}\\
% 	h_2(x)&:=f(w^*) + \inner{f'(w^*)}{x-w^*} + \frac{M}{2}\|x -w^*\|^2 + \frac{1}{2\eta}\|x -y\|^2 + \sum_{i=1}^n \frac{(1-\alpha_i)\delta}{2}. \label{def:h2-n}
% \end{align}

% \[
% h_1(x) \le f_y^\eta(x)\le h_2(x).
% \]

% \begin{align}
%      \int \exp(-h_1(x)) dx
%     &= \left(\frac{2\pi \eta}{1-\eta M} \right)^{d/2}
%     \exp\left( \frac{1}{2\eta}\|w\|^2 + \frac{\eta}{2(1-\eta M)} \|s\|^2 - f(w) + \frac1\eta\inner{w}{y-w} - \frac{1}{2\eta}\|y\|^2 + \sum_{i=1}^n \frac{1-\alpha_i}{2}\delta \right), \label{eq:h1-int}\\
%     \int \exp(-h_2(x)) dx
%     &= \left(\frac{2\pi \eta}{1+\eta M} \right)^{d/2}
%     \exp\left( \frac{1}{2\eta}\|w^*\|^2 - f(w^*) + \inner{f'(w^*)}{w^*} - \frac{1}{2\eta}\|y\|^2 - \sum_{i=1}^n \frac{1-\alpha_i}{2}\delta \right). \label{eq:h2-int-n}
% \end{align}

The next proposition is a counterpart of Proposition \ref{prop:opt} and gives the complexity of solving optimization problem \eqref{eq:fy-eta} in the context of \eqref{ineq:semi-smooth}.

\begin{proposition}\label{prop:opt-n}
Assume $\eta\le 1/(M_nd)$, and let $w_n\in \R^d$ be an approximate stationary point of $f_y^\eta$ such that $\|f'(w_n)+\frac{1}{\eta}(w_n-y)\|\le \sqrt{M_n d}$.
Then, the iteration-complexity to find $w_n$ with Algorithm \ref{alg:AG} is $\tilde{\cal O}(1)$.
\end{proposition}

Again, the core to the proximal algorithm (Algorithm \ref{alg:ASF}) is an efficient implementation of RGO. We use Algorithm \ref{alg:RGO-nonconvex} with slight modification as a realization of RGO in the context of \eqref{ineq:semi-smooth}. First, in step 1 of Algorithm \ref{alg:RGO-nonconvex}, we use Algorithm \ref{alg:AG} to compute $w_n$ as in Proposition \ref{prop:opt-n} instead of $w$. Second, in steps 2 and 4 of Algorithm \ref{alg:RGO-nonconvex}, instead of using $h_{1,y}^w$ as in \eqref{def:h1}, we define $h_{1,y}^w$ as follows,
\[
h_{1,y}^w(x)=f(w) + \inner{f'(w)}{x-w} - \frac{M_n}{2}\|x -w\|^2 + \frac{1}{2\eta}\|x -y\|^2 - \sum_{i=1}^n \frac{(1-\alpha_i)\delta}{2},
\]
where $M_n$ is as in \eqref{def:M-n}.
The next proposition is a generalization of Proposition \ref{prop:exp}.
% in the generic setting where the potential $f$ satisfies \eqref{ineq:semi-smooth} with $n\ge1$.

\begin{proposition}\label{prop:exp-n}
Assume $f$ satisfies \eqref{ineq:semi-smooth} and let $f_y^\eta$ be as in \eqref{eq:fy-eta}. Then $X$ generated by Algorithm \ref{alg:RGO-nonconvex} with modification follows the distribution
$\pi^{X|Y}(x\mid y) \propto \exp\left(-f_y^\eta(x)\right)$.
Moreover, if $\eta \le \frac{1}{M_nd}$,
then the expected number of rejection steps in Algorithm \ref{alg:RGO-nonconvex} is at most $\exp\left(\frac{3\sum_{i=1}^n(1-\alpha_i)\delta}2 + 3 \right)$.
\end{proposition}

In the rest of this section, we present two results on the complexity of sampling for non-convex composite potentials where the target distribution satisfies either LSI or PI. Our complexity bounds are better than all existing results when $\min \alpha_i \in [0, 1)$, see Table \ref{tab:t2} for comparison. 
% As discussed in Section \ref{sec:semismooth}, we also consider two cases where the target distribution $\pi^X \propto \exp(-f)$ satisfies either LSI or PI.

\begin{theorem}\label{thm:KL}
Assume $f$ satisfies \eqref{ineq:semi-smooth} and $\pi^X \propto \exp(-f)$ satisfies LSI with constant $C_{\rm LSI}$.  
With initial distribution $\rho_0^X$ and stepsize $\eta \asymp 1/\left(\sum_{i=1}^n L_{\alpha_i}^{\frac{2}{\alpha_i+1}} d \right)$, Algorithm \ref{alg:ASF} using the modified Algorithm \ref{alg:RGO-nonconvex} as an RGO has the iteration-complexity bound 
    \begin{equation}
        \tilde {\cal O}\left(C_{\rm LSI}\sum_{i=1}^n L_{\alpha_i}^{\frac{2}{\alpha_i+1}} d \right)
    \end{equation}
to achieve $\varepsilon$ error to $\pi^X$ in KL divergence.
Each iteration queries $\tilde {\cal O}(1)$ subgradients of $f$ and generates ${\cal O}(1)$ samples in expectation from Gaussian distribution..
Moreover, for all $q \ge 1$, the iteration-complexity bound to achieve $\varepsilon$ error to $\pi^X$ in R\'enyi divergence $R_{q,\pi^X}$ is
\[
\tilde {\cal O}\left(C_{\rm LSI}\sum_{i=1}^n L_{\alpha_i}^{\frac{2}{\alpha_i+1}} d q \right).
\]
\end{theorem}

\begin{proof}
    This theorem is a direct consequence of Theorem \ref{thm:outer} and Propositions \ref{prop:opt-n}-\ref{prop:exp-n}.
\end{proof}

\begin{theorem}\label{thm:Renyi}
Assume $f$ satisfies \eqref{ineq:semi-smooth} and $\pi^X \propto \exp(-f)$ satisfies PI with constant $C_{\rm PI}$.
With initial distribution $\rho_0^X$ and stepsize $\eta \asymp 1/\left(\sum_{i=1}^n L_{\alpha_i}^{\frac{2}{\alpha_i+1}} d \right)$, Algorithm \ref{alg:ASF} using the modified Algorithm \ref{alg:RGO-nonconvex} as an RGO has the iteration-complexity bound 
    \begin{equation}
        \tilde{\cal O}\left(C_{\rm PI}\sum_{i=1}^n L_{\alpha_i}^{\frac{2}{\alpha_i+1}} d \log \chi_{\pi^X}^2(\rho^X_0)\right)
    \end{equation}
to achieve $\varepsilon$ error to the target $\pi^X$ in terms of Chi-squared divergence, and 
    \begin{equation}
        \tilde {\cal O}\left(C_{\rm PI} \sum_{i=1}^n L_{\alpha_i}^{\frac{2}{\alpha_i+1}} qd R_{q,\pi^X}(\rho_0^X)\right), \quad q\ge 2
    \end{equation}
to achieve $\varepsilon$ error in R\'enyi divergence $R_{q,\pi^X}$.
Each iteration queries $\tilde {\cal O}(1)$ subgradients of $f$ 
and generates ${\cal O}(1)$ samples in expectation from Gaussian distribution.
\end{theorem}

\begin{proof}
    This theorem immediately follows from Theorem \ref{Thm:Poincare} and Propositions \ref{prop:opt-n}-\ref{prop:exp-n}.
\end{proof}

\section{Proximal sampling for convex composite potentials} \label{sec:convex}

For completeness, in this section we present a complexity result for sampling from distributions with convex potential $f$ satisfying \eqref{ineq:semi-smooth}. We only consider the weakly convex setting here as strongly log-concave distributions satisfy LSI and are thus covered by Theorem \ref{thm:KL}.
% In other words, we only assume the target distribution $\pi^X \propto \exp(-f)$ is log-concave but no longer assume that $\pi^X$ satisfies LSI. 
The result is a consequence of 
Theorem \ref{thm:cvx} for log-concave densities combined with our RGO implementation. 
% The second result is obtained by first applying Theorem \ref{thm:outer} on a regularized convex semi-smooth potential and then specifying the regularization parameter.
In particular, Lemma \ref{lem:approximation} and Propositions \ref{prop:opt-n}-\ref{prop:exp-n} apply to this setting.
Hence, using Theorem \ref{thm:cvx} and Propositions \ref{prop:opt-n}-\ref{prop:exp-n}, we directly obtain the following complexity result. Note our result is applicable to composite potentials with any number of components while most existing results are only applicable to composite potentials with at most two components. We also present an alternative result via a regularization approach in Appendix~\ref{sec:regularization}.

\begin{theorem}\label{thm:lcc}
Assume $f$ satisfies \eqref{ineq:semi-smooth} and $\pi^X \propto \exp(-f)$ is log-concave.
With initial distribution $\rho_0^X$ and stepsize $\eta \asymp 1/\left(\sum_{i=1}^n L_{\alpha_i}^{\frac{2}{\alpha_i+1}} d \right)$, Algorithm \ref{alg:ASF} using Algorithm \ref{alg:RGO-nonconvex} as an RGO has the iteration-complexity bound 
    \[
        \tilde{\cal O}\left(\frac{W_2^2(\rho_0^X,\pi^X) \sum_{i=1}^n L_{\alpha_i}^{\frac{2}{\alpha_i+1}} d}{\varepsilon}\right)
    \]
to achieve $\varepsilon$ error to the target $\pi^X$ in terms of KL divergence.
Each iteration queries $\tilde {\cal O}(1)$ subgradients of $f$ and generates ${\cal O}(1)$ samples in expectation from standard Gaussian distribution.
\end{theorem}
% {\chen add a discussion on comparison with existing results on sampling from convex semi-smooth potentials}

\begin{proof}
    This theorem is a direct consequence of Theorem \ref{thm:cvx} and Propositions \ref{prop:opt-n}-\ref{prop:exp-n}.
\end{proof}

\section{Computational results}\label{sec:numericcs}

In this section, we present a numerical example to illustrate our result. We 
% present preliminary computational results to demonstrate the efficacy of the proximal sampling algorithm.
consider sampling from a Gaussian-Laplace mixture
\[
\nu(x)=0.5 (2\pi)^{-d/2} \sqrt{\det Q} \exp(-(x-\textbf{1})^\top Q (x-\textbf{1})/2) + 0.5 (2^d) \exp(-\|4x\|_1)
\]
where $Q=USU^\top$, $d=5$, $S=\text{diag}(14,15,16,17,18)$, and $U$ is an arbitrary orthogonal matrix.
% The first test problem is Bayesian Lasso where the potential function is $f(x)=5\|x\|_2^2 + \|x\|_1$.

% \begin{figure}[h]
% \begin{center}
% %\framebox[4.0in]{$\;$}
% \fbox{\rule[-.5cm]{0cm}{4cm} \rule[-.5cm]{4cm}{0cm}}
% \end{center}
% \caption{Sample figure caption.}
% \end{figure}

We run $500,000$ iterations (with $100,000$ burn-in iterations) for both the proximal sampling algorithm and LMC with $\eta=1/(Md)$ where $d=5$ and $M$ is as in \eqref{def:M} with $(\alpha,L_\alpha)=(1,27)$ and $\delta=1$. Histograms and trace plots (of the $3$-rd coordinate) of the samples generated by both methods are presented in Figures~\ref{fig:GL-ASF} and \ref{fig:GL-ULA}.  
The average numbers of optimization iterations and rejection sampling iterations in Algorithm \ref{alg:RGO-nonconvex} are 1.5 and 1.3, respectively.
In addition, we also run $2,500,000$ iterations (with $500,000$ burn-in iterations) for LMC with $\eta=1/(5Md)$. Figure~\ref{fig:GL-ULA-small} presents the histogram and trace plot for this LMC.
The red curves in histograms are the scaled target density $\nu(x)$. Figures \ref{fig:GL-MALA} and \ref{fig:GL-MALA-small} show the results for MALA in the same settings as those of Figures \ref{fig:GL-ULA} and \ref{fig:GL-ULA-small}, respectively. In these experiments, we observe that our algorithm achieves better accuracy under the same computational budget constraints.

\begin{figure}[H]
\centering
\begin{tabular}{cc}
\includegraphics[scale=0.1]{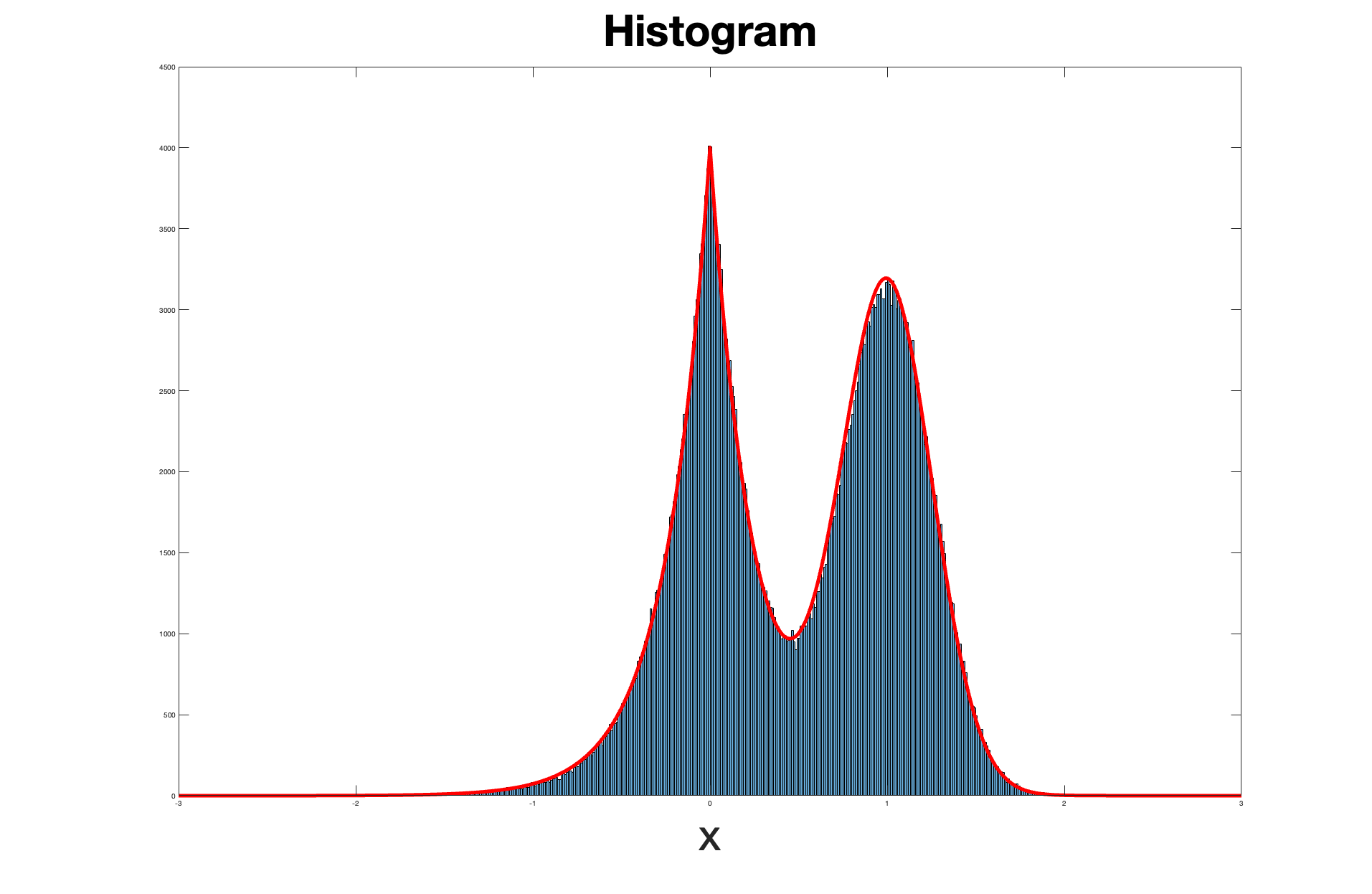}&
\includegraphics[scale=0.1]{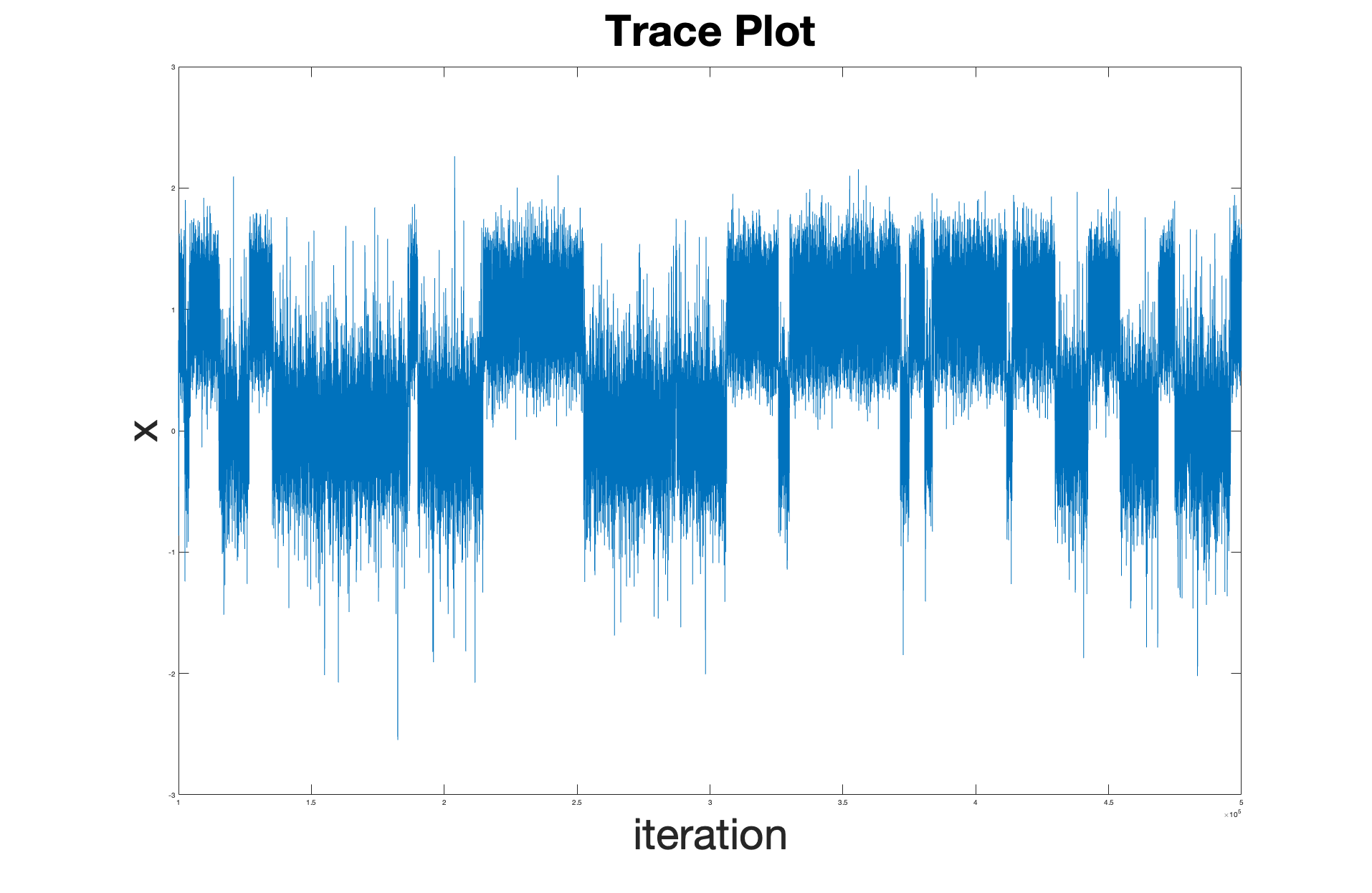}
\end{tabular}
\caption{Gaussian-Lasso mixture using the proximal sampling algorithm}
\label{fig:GL-ASF}
\end{figure}

\begin{figure}[H]
\centering
\begin{tabular}{cc}
\includegraphics[scale=0.1]{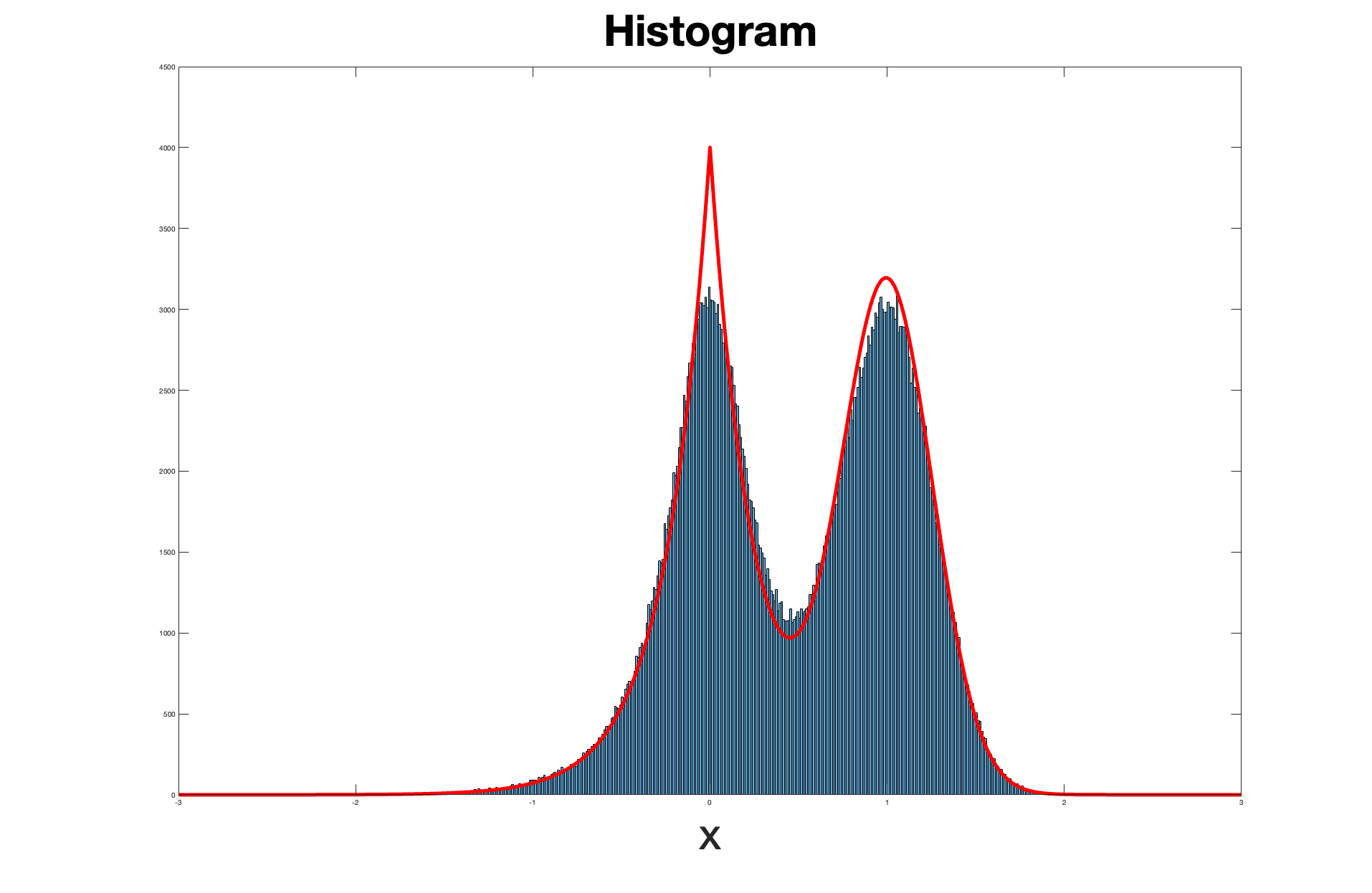}&
\includegraphics[scale=0.1]{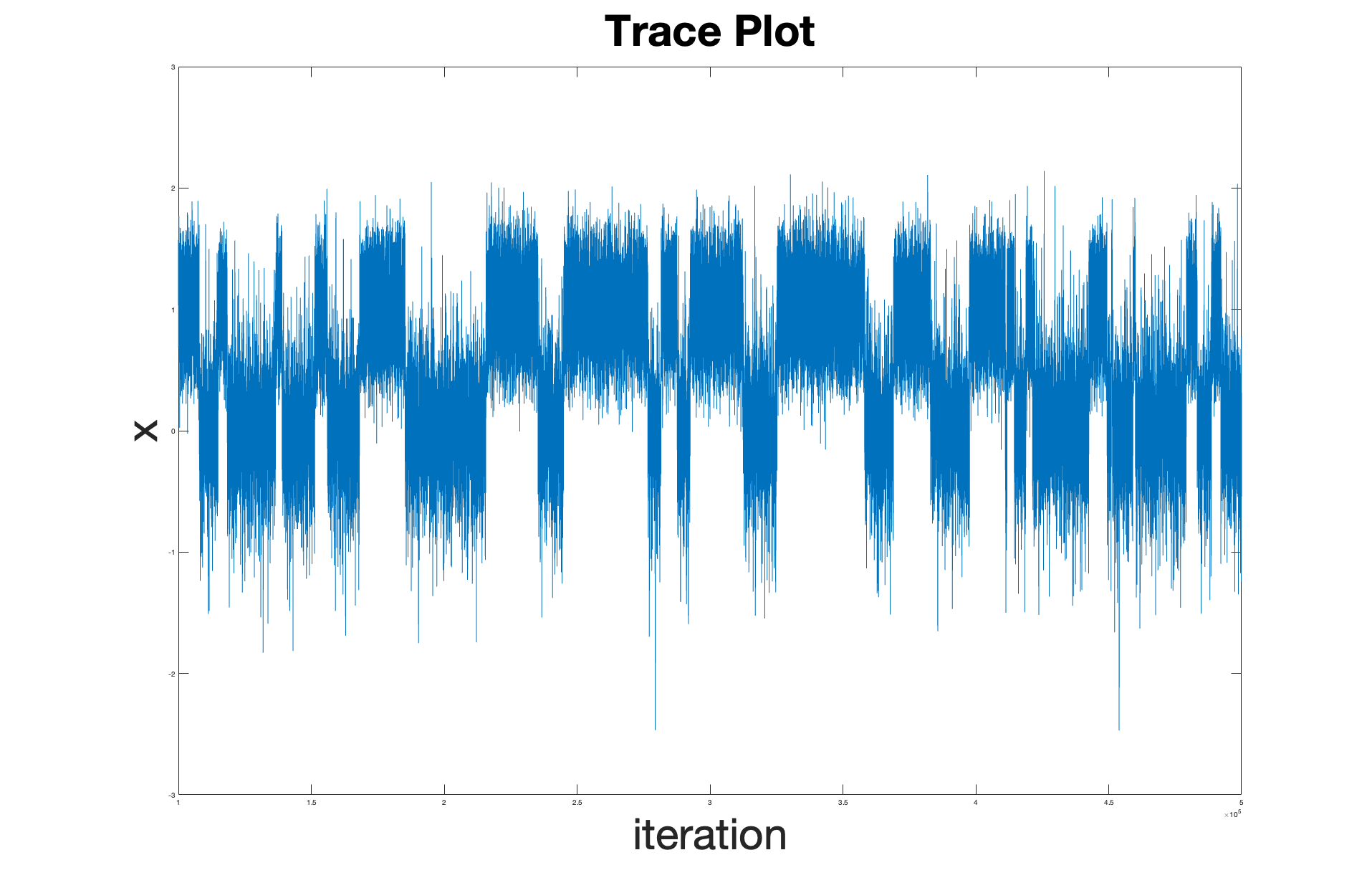}
\end{tabular}
\caption{Gaussian-Lasso mixture using LMC}
\label{fig:GL-ULA}
\end{figure}

\begin{figure}[H]
\centering
\begin{tabular}{cc}
\includegraphics[scale=0.1]{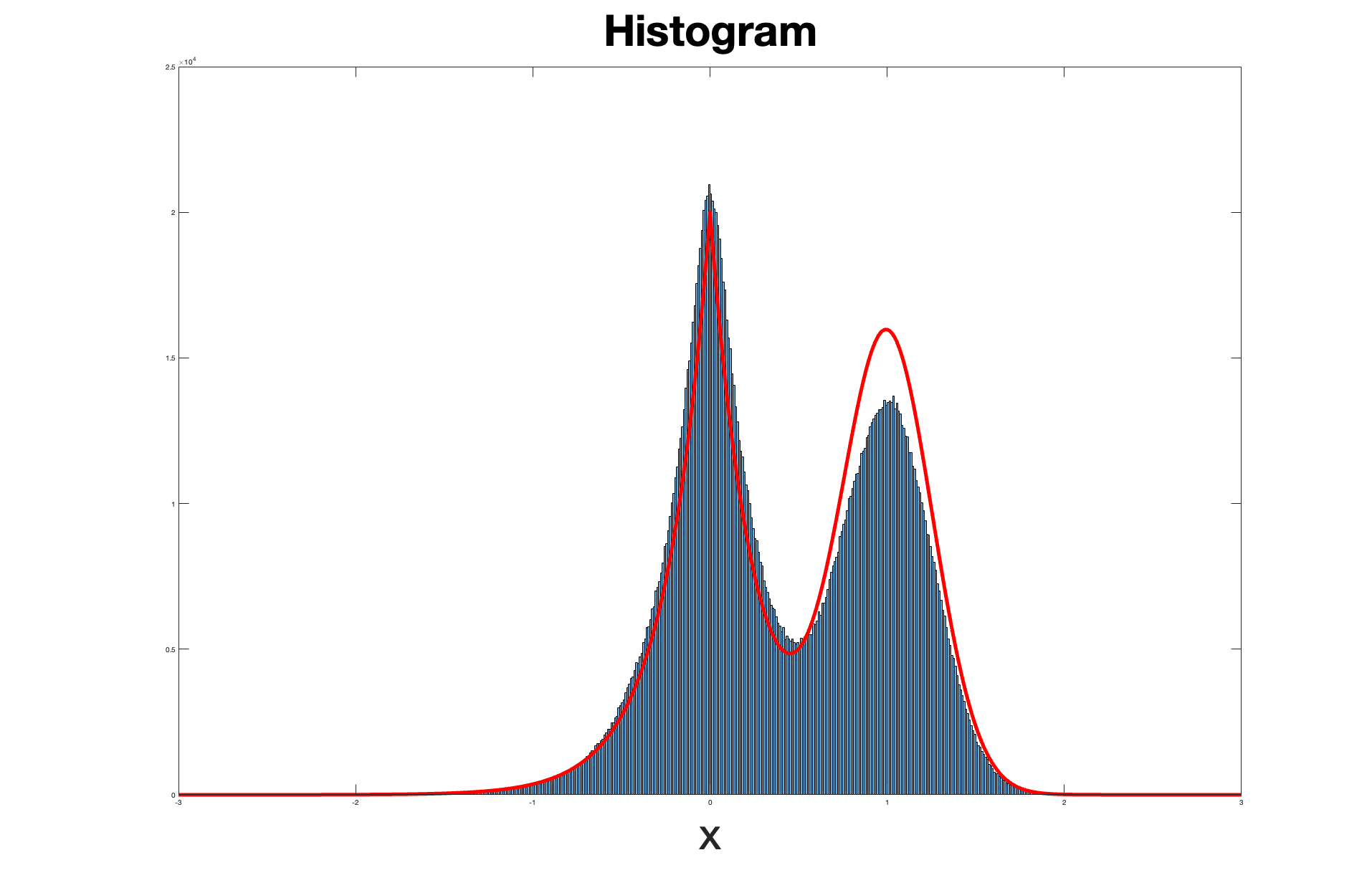}&
\includegraphics[scale=0.1]{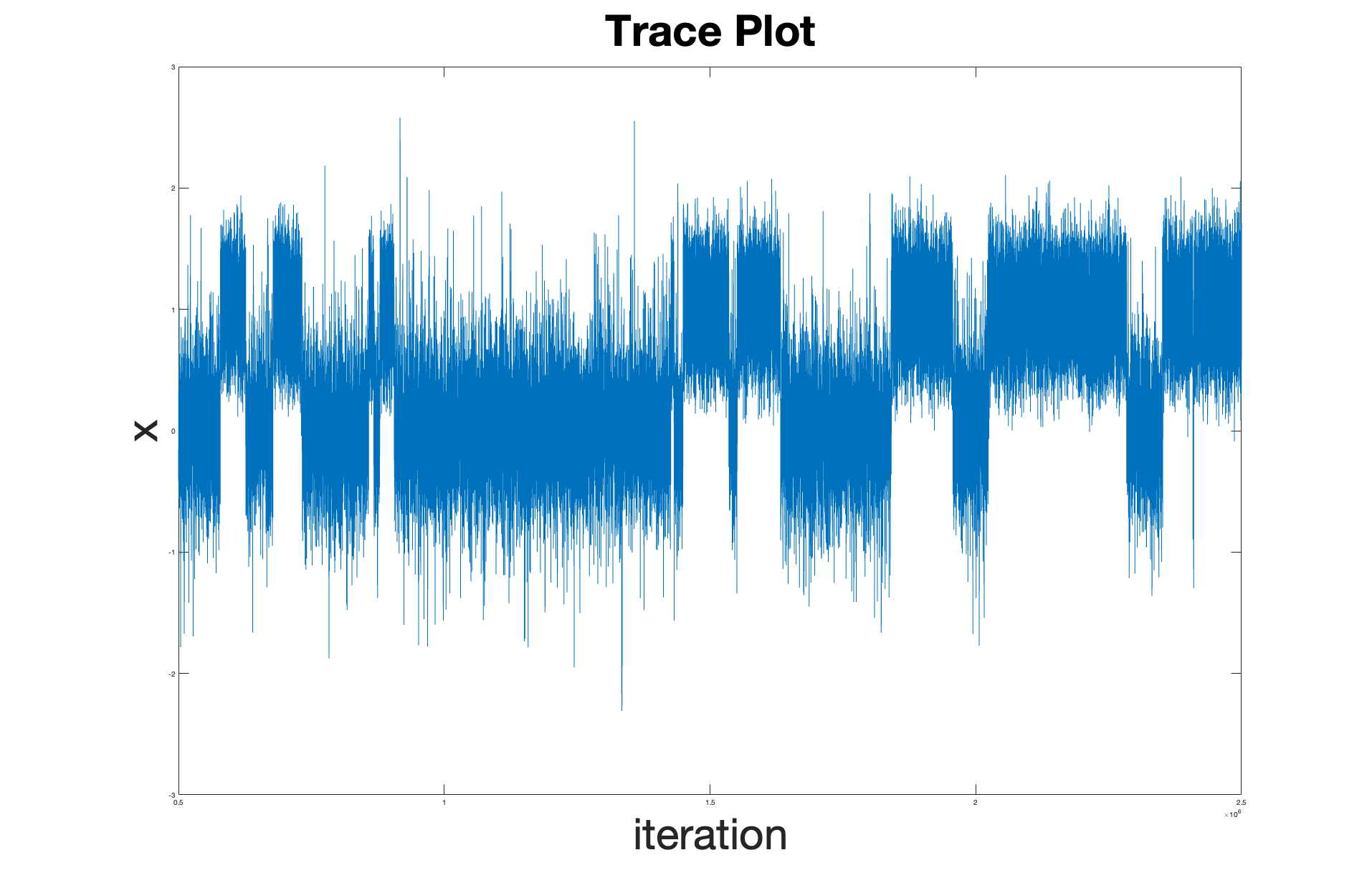}
\end{tabular}
\caption{Gaussian-Lasso mixture using LMC with small $\eta$}
\label{fig:GL-ULA-small}
\end{figure}

\begin{figure}[H]
\centering
\begin{tabular}{cc}
\includegraphics[scale=0.1]{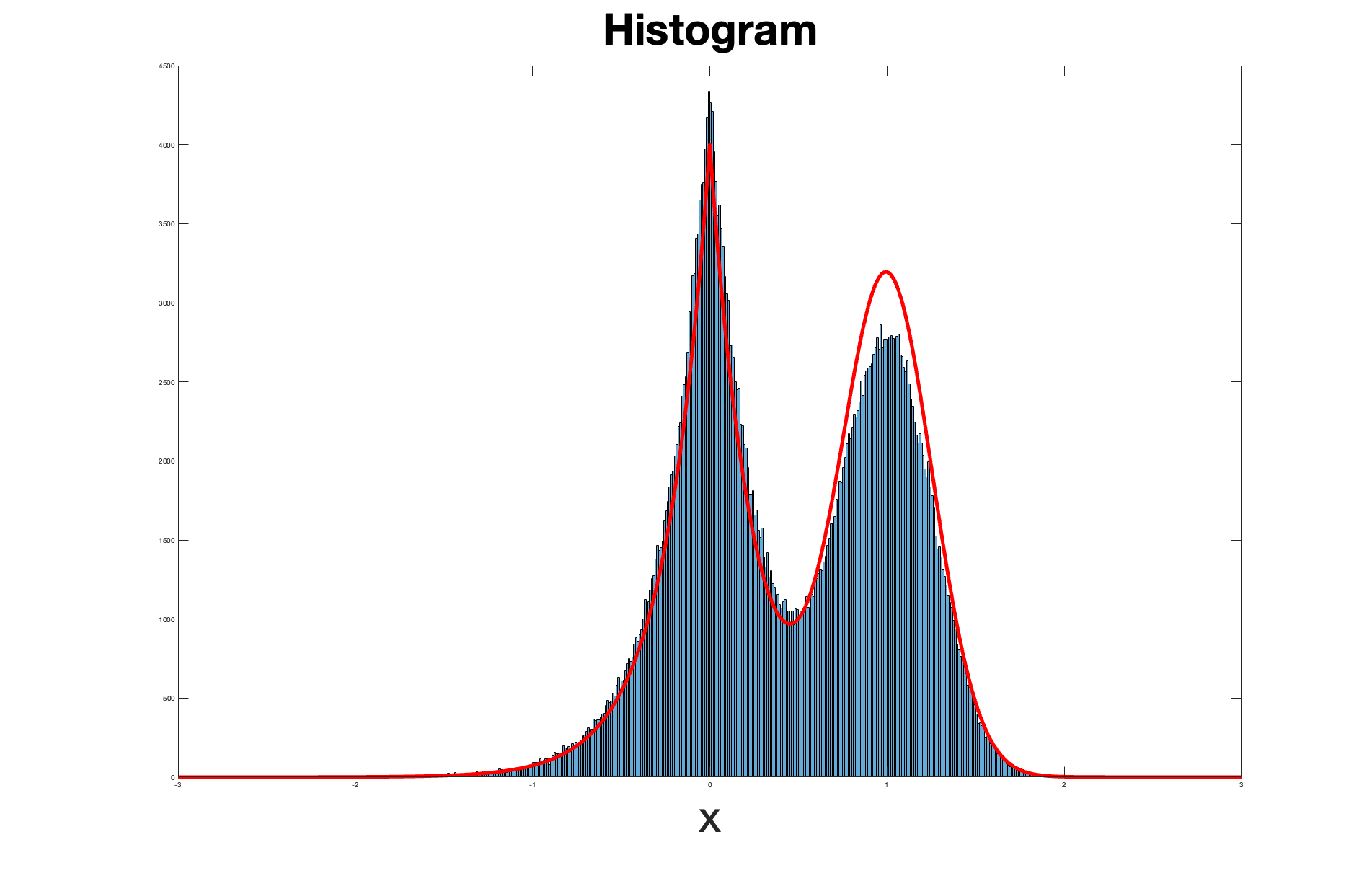}&
\includegraphics[scale=0.1]{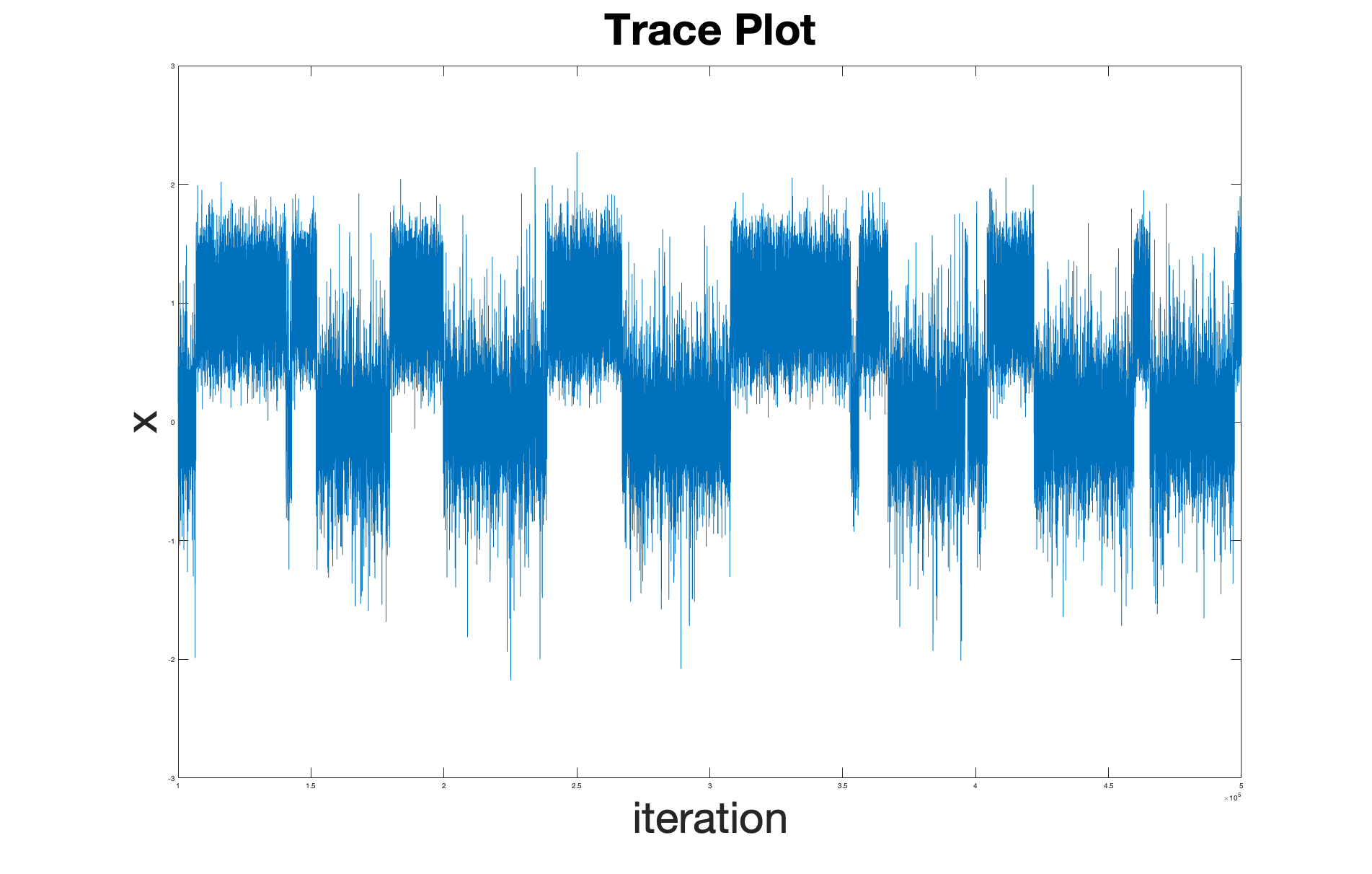}
\end{tabular}
\caption{Gaussian-Lasso mixture using MALA}
\label{fig:GL-MALA}
\end{figure}

\begin{figure}[H]
\centering
\begin{tabular}{cc}
\includegraphics[scale=0.1]{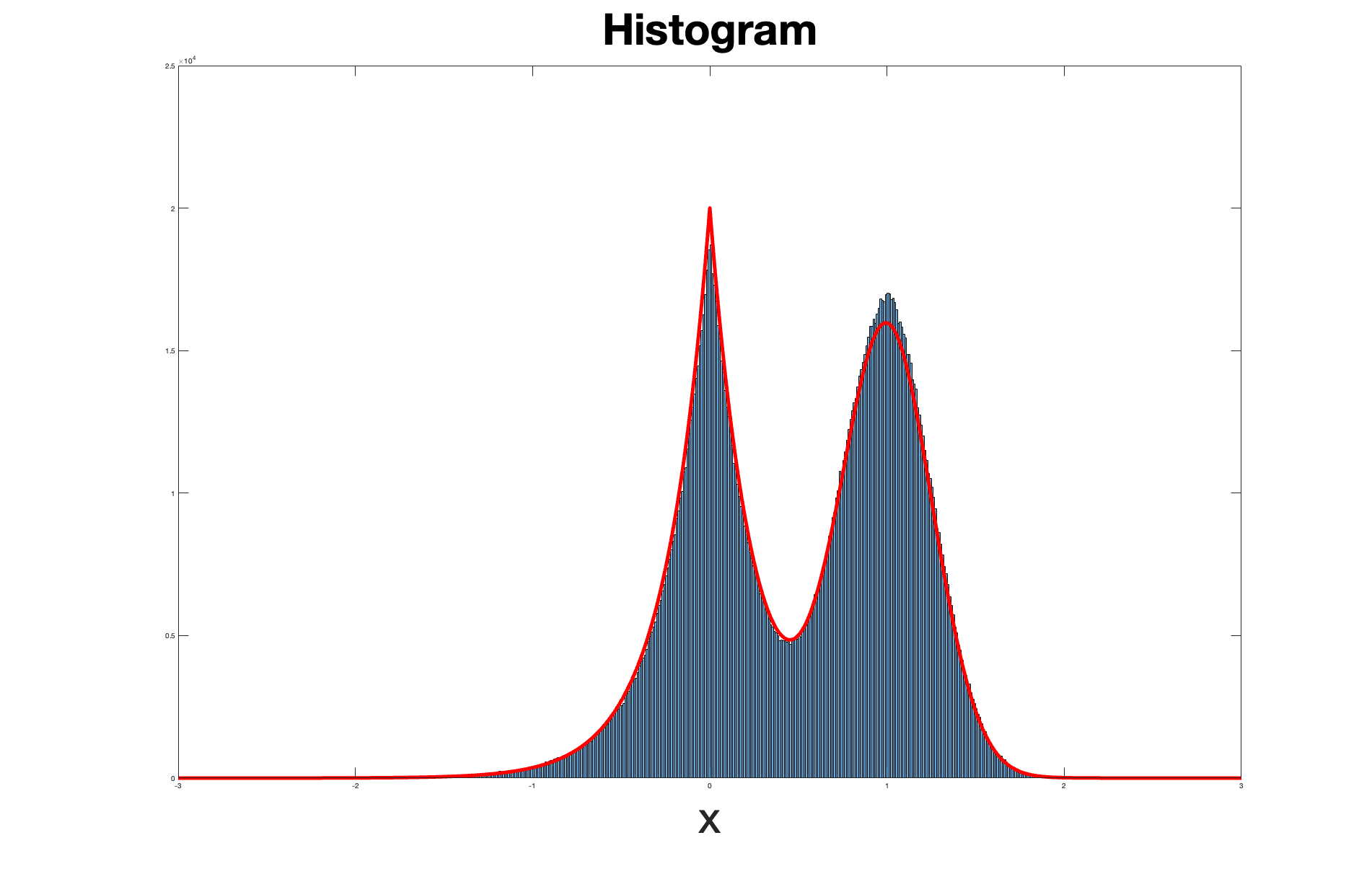}&
\includegraphics[scale=0.1]{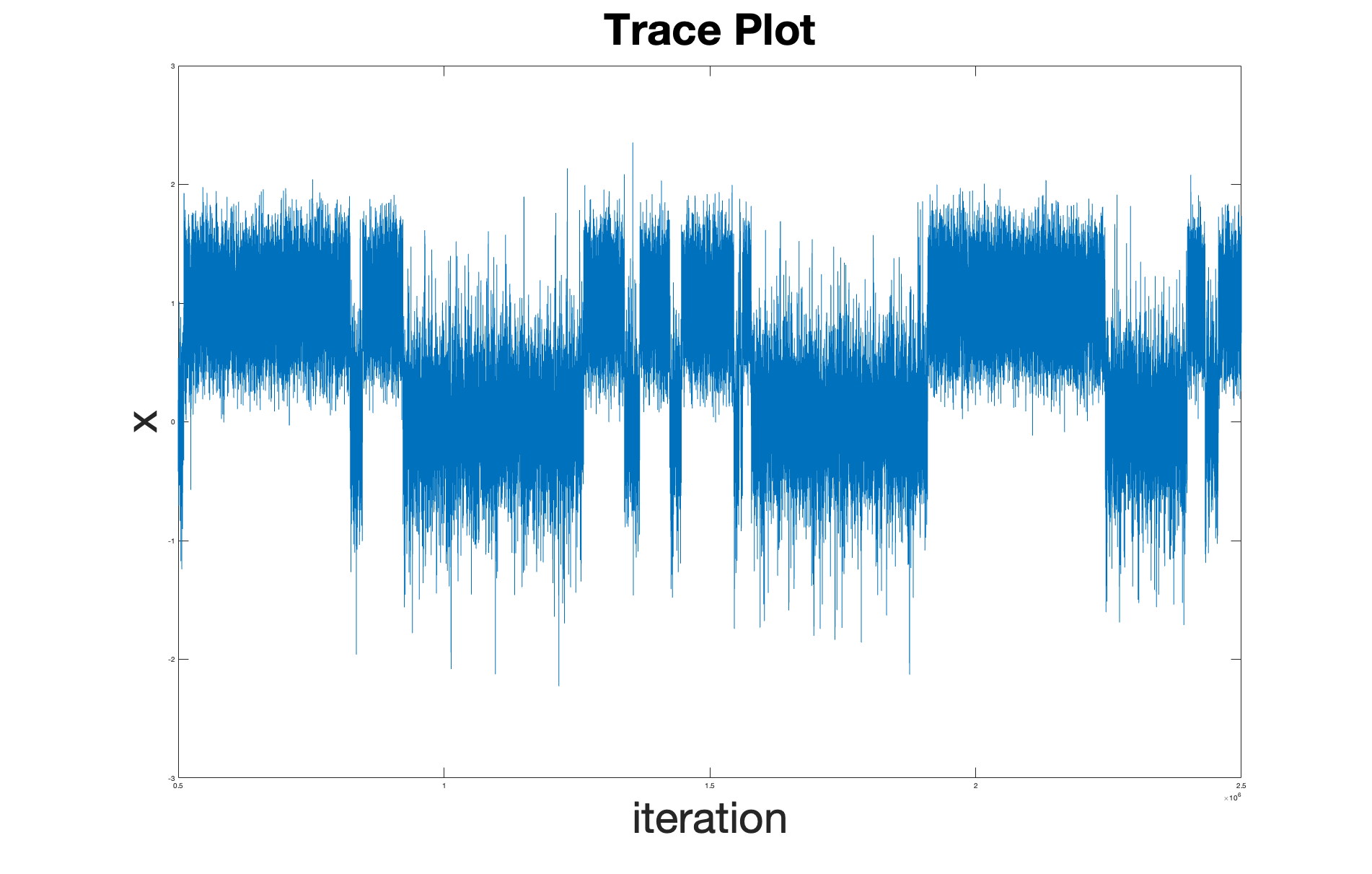}
\end{tabular}
\caption{Gaussian-Lasso mixture using MALA with small $\eta$}
\label{fig:GL-MALA-small}
\end{figure}

\section{Conclusions}\label{sec:conclusion}
In this paper, we develop a novel proximal sampling algorithm for distributions with semi-smooth potentials and establish complexity-bound results for the proposed method under the assumption that distributions are log-concave or satisfy LSI or PI.
Our proximal algorithm is based on the ASF, which resembles the proximal point method in optimization. Each iteration of the ASF generates a sample from a regularized target distribution by querying the RGO, which is itself a challenging algorithmic task due to the lack of smoothness and possibly convexity.
The core to our approach
% development of the proximal sampling algorithm 
is an efficient realization of the RGO based on rejection sampling.
We develop a novel technique to bound the expected number of rejection steps in the RGO, which leads to competitive complexity results to sample from semi-smooth and possibly non-convex potentials.
% Finally, we extend our proximal algorithm and the novel analysis to the case where the potential has multiple semi-smooth components, and establish state-of-the-art complexity results for this setting.

% \subsubsection*{Broader Impact Statement}
% In this optional section, TMLR encourages authors to discuss possible repercussions of their work,
% notably any potential negative impact that a user of this research should be aware of. 
% Authors should consult the TMLR Ethics Guidelines available on the TMLR website
% for guidance on how to approach this subject.

% \subsubsection*{Author Contributions}
% If you'd like to, you may include a section for author contributions as is done
% in many journals. This is optional and at the discretion of the authors. Only add
% this information once your submission is accepted and deanonymized. 

% \subsubsection*{Acknowledgments}
% Use unnumbered third level headings for the acknowledgments. All
% acknowledgments, including those to funding agencies, go at the end of the paper.
% Only add this information once your submission is accepted and deanonymized. 

\bibliography{ref.bib}

\def\cprime{$'$}
\begin{thebibliography}{45}
\providecommand{\natexlab}[1]{#1}
\providecommand{\url}[1]{\texttt{#1}}
\expandafter\ifx\csname urlstyle\endcsname\relax
  \providecommand{\doi}[1]{doi: #1}\else
  \providecommand{\doi}{doi: \begingroup \urlstyle{rm}\Url}\fi

\bibitem[Balasubramanian et~al.(2022)Balasubramanian, Chewi, Erdogdu, Salim,
  and Zhang]{balasubramanian2022towards}
Krishna Balasubramanian, Sinho Chewi, Murat~A Erdogdu, Adil Salim, and Shunshi
  Zhang.
\newblock Towards a theory of non-log-concave sampling: first-order
  stationarity guarantees for langevin monte carlo.
\newblock In \emph{Conference on Learning Theory}, pp.\  2896--2923. PMLR,
  2022.

\bibitem[Bernton(2018)]{bernton2018langevin}
Espen Bernton.
\newblock Langevin {M}onte {C}arlo and {JKO} splitting.
\newblock In \emph{Conference On Learning Theory}, pp.\  1777--1798. PMLR,
  2018.

\bibitem[Bou-Rabee \& Hairer(2013)Bou-Rabee and Hairer]{bou2013nonasymptotic}
Nawaf Bou-Rabee and Martin Hairer.
\newblock Nonasymptotic mixing of the {MALA} algorithm.
\newblock \emph{IMA Journal of Numerical Analysis}, 33\penalty0 (1):\penalty0
  80--110, 2013.

\bibitem[Chatterji et~al.(2020)Chatterji, Diakonikolas, Jordan, and
  Bartlett]{chatterji2020langevin}
Niladri Chatterji, Jelena Diakonikolas, Michael~I Jordan, and Peter Bartlett.
\newblock Langevin {Monte Carlo} without smoothness.
\newblock In \emph{International Conference on Artificial Intelligence and
  Statistics}, pp.\  1716--1726. PMLR, 2020.

\bibitem[Chen et~al.(2022)Chen, Chewi, Salim, and Wibisono]{CheCheSalWib22}
Yongxin Chen, Sinho Chewi, Adil Salim, and Andre Wibisono.
\newblock Improved analysis for a proximal algorithm for sampling.
\newblock In \emph{Conference on Learning Theory}, pp.\  2984--3014. PMLR,
  2022.

\bibitem[Chewi et~al.(2021)Chewi, Erdogdu, Li, Shen, and
  Zhang]{chewi2021analysis}
Sinho Chewi, Murat~A Erdogdu, Mufan~Bill Li, Ruoqi Shen, and Matthew Zhang.
\newblock Analysis of {Langevin Monte Carlo from Poincaré to Log-Sobolev}.
\newblock \emph{arXiv preprint arXiv:2112.12662}, 2021.

\bibitem[Clarage et~al.(1995)Clarage, Romo, Andrews, Pettitt, and
  Phillips]{clarage1995sampling}
James~B Clarage, Tod Romo, B~Kim Andrews, B~Montgomery Pettitt, and George~N
  Phillips.
\newblock A sampling problem in molecular dynamics simulations of
  macromolecules.
\newblock \emph{Proceedings of the National Academy of Sciences}, 92\penalty0
  (8):\penalty0 3288--3292, 1995.

\bibitem[Dalalyan(2017)]{dalalyan2017theoretical}
Arnak~S Dalalyan.
\newblock Theoretical guarantees for approximate sampling from smooth and
  log-concave densities.
\newblock \emph{Journal of the Royal Statistical Society: Series B (Statistical
  Methodology)}, 79\penalty0 (3):\penalty0 651--676, 2017.

\bibitem[Dalalyan \& Riou-Durand(2020)Dalalyan and
  Riou-Durand]{dalalyan2020sampling}
Arnak~S Dalalyan and Lionel Riou-Durand.
\newblock On sampling from a log-concave density using kinetic {Langevin}
  diffusions.
\newblock \emph{Bernoulli}, 26\penalty0 (3):\penalty0 1956--1988, 2020.

\bibitem[Dalalyan et~al.(2022)Dalalyan, Karagulyan, and
  Riou-Durand]{dalalyan2022bounding}
Arnak~S Dalalyan, Avetik Karagulyan, and Lionel Riou-Durand.
\newblock Bounding the error of discretized langevin algorithms for
  non-strongly log-concave targets.
\newblock \emph{Journal of Machine Learning Research}, 23\penalty0
  (235):\penalty0 1--38, 2022.

\bibitem[Devolder et~al.(2014)Devolder, Glineur, and
  Nesterov]{devolder2014first}
Olivier Devolder, Fran{\c{c}}ois Glineur, and Yurii Nesterov.
\newblock First-order methods of smooth convex optimization with inexact
  oracle.
\newblock \emph{Mathematical Programming}, 146\penalty0 (1):\penalty0 37--75,
  2014.

\bibitem[Durmus et~al.(2018)Durmus, Moulines, and Pereyra]{durmus2018efficient}
Alain Durmus, Eric Moulines, and Marcelo Pereyra.
\newblock Efficient {B}ayesian computation by proximal {M}arkov {C}hain {M}onte
  {C}arlo: when {L}angevin meets {M}oreau.
\newblock \emph{SIAM Journal on Imaging Sciences}, 11\penalty0 (1):\penalty0
  473--506, 2018.

\bibitem[Durmus et~al.(2019)Durmus, Majewski, and
  Miasojedow]{durmus2019analysis}
Alain Durmus, Szymon Majewski, and B{\l}a{\.z}ej Miasojedow.
\newblock Analysis of {Langevin Monte Carlo} via convex optimization.
\newblock \emph{The Journal of Machine Learning Research}, 20\penalty0
  (1):\penalty0 2666--2711, 2019.

\bibitem[Erdogdu \& Hosseinzadeh(2021)Erdogdu and
  Hosseinzadeh]{erdogdu2021convergence}
Murat~A Erdogdu and Rasa Hosseinzadeh.
\newblock On the convergence of {Langevin Monte Carlo}: the interplay between
  tail growth and smoothness.
\newblock In \emph{Conference on Learning Theory}, pp.\  1776--1822. PMLR,
  2021.

\bibitem[Freund et~al.(2022)Freund, Ma, and Zhang]{freund2022convergence}
Yoav Freund, Yi-An Ma, and Tong Zhang.
\newblock When is the convergence time of langevin algorithms dimension
  independent? a composite optimization viewpoint.
\newblock \emph{Journal of Machine Learning Research}, 23\penalty0
  (214):\penalty0 1--32, 2022.

\bibitem[Gelman et~al.(2013)Gelman, Carlin, Stern, Dunson, Vehtari, and
  Rubin]{gelman2013bayesian}
Andrew Gelman, John~B Carlin, Hal~S Stern, David~B Dunson, Aki Vehtari, and
  Donald~B Rubin.
\newblock \emph{Bayesian {D}ata {A}nalysis}.
\newblock CRC press, 2013.

\bibitem[Geman \& Geman(1984)Geman and Geman]{GemGem84}
Stuart Geman and Donald Geman.
\newblock Stochastic relaxation, {G}ibbs distributions, and the {B}ayesian
  restoration of images.
\newblock \emph{IEEE Transactions on pattern analysis and machine
  intelligence}, \penalty0 (6):\penalty0 721--741, 1984.

\bibitem[Grenander \& Miller(1994)Grenander and
  Miller]{grenander1994representations}
Ulf Grenander and Michael~I Miller.
\newblock Representations of knowledge in complex systems.
\newblock \emph{Journal of the Royal Statistical Society: Series B
  (Methodological)}, 56\penalty0 (4):\penalty0 549--581, 1994.

\bibitem[Izmailov et~al.(2021)Izmailov, Vikram, Hoffman, and
  Wilson]{izmailov2021bayesian}
Pavel Izmailov, Sharad Vikram, Matthew~D Hoffman, and Andrew Gordon~Gordon
  Wilson.
\newblock What are bayesian neural network posteriors really like?
\newblock In \emph{International Conference on Machine Learning}, pp.\
  4629--4640. PMLR, 2021.

\bibitem[Koller \& Friedman(2009)Koller and Friedman]{koller2009probabilistic}
Daphne Koller and Nir Friedman.
\newblock \emph{Probabilistic graphical models: principles and techniques}.
\newblock MIT press, 2009.

\bibitem[Kononenko(1989)]{kononenko1989bayesian}
Igor Kononenko.
\newblock Bayesian neural networks.
\newblock \emph{Biological Cybernetics}, 61\penalty0 (5):\penalty0 361--370,
  1989.

\bibitem[Krauth(2006)]{krauth2006statistical}
Werner Krauth.
\newblock \emph{Statistical mechanics: algorithms and computations}, volume~13.
\newblock OUP Oxford, 2006.

\bibitem[Lee et~al.(2021)Lee, Shen, and Tian]{lee2021structured}
Yin~Tat Lee, Ruoqi Shen, and Kevin Tian.
\newblock Structured logconcave sampling with a restricted {G}aussian oracle.
\newblock In \emph{Conference on Learning Theory}, pp.\  2993--3050. PMLR,
  2021.

\bibitem[Lehec(2021)]{lehec2021langevin}
Joseph Lehec.
\newblock The langevin monte carlo algorithm in the non-smooth log-concave
  case.
\newblock \emph{Available on arXiv:2101.10695}, 2021.

\bibitem[Liang \& Chen(2022)Liang and Chen]{LiaChe21}
Jiaming Liang and Yongxin Chen.
\newblock A proximal algorithm for sampling from non-smooth potentials.
\newblock In \emph{2022 Winter Simulation Conference (WSC)}, pp.\  3229--3240,
  2022.

\bibitem[Luu et~al.(2021)Luu, Fadili, and Chesneau]{LuuFadChe21}
Tung~Duy Luu, Jalal Fadili, and Christophe Chesneau.
\newblock Sampling from non-smooth distributions through {L}angevin diffusion.
\newblock \emph{Methodology and Computing in Applied Probability}, 23\penalty0
  (4):\penalty0 1173--1201, 2021.

\bibitem[Maximova et~al.(2016)Maximova, Moffatt, Ma, Nussinov, and
  Shehu]{maximova2016principles}
Tatiana Maximova, Ryan Moffatt, Buyong Ma, Ruth Nussinov, and Amarda Shehu.
\newblock Principles and overview of sampling methods for modeling
  macromolecular structure and dynamics.
\newblock \emph{PLoS computational biology}, 12\penalty0 (4):\penalty0
  e1004619, 2016.

\bibitem[Mou et~al.(2022{\natexlab{a}})Mou, Flammarion, Wainwright, and
  Bartlett]{mou2022efficient}
Wenlong Mou, Nicolas Flammarion, Martin~J Wainwright, and Peter~L Bartlett.
\newblock An efficient sampling algorithm for non-smooth composite potentials.
\newblock \emph{Journal of Machine Learning Research}, 23\penalty0
  (233):\penalty0 1--50, 2022{\natexlab{a}}.

\bibitem[Mou et~al.(2022{\natexlab{b}})Mou, Flammarion, Wainwright, and
  Bartlett]{mou2022improved}
Wenlong Mou, Nicolas Flammarion, Martin~J Wainwright, and Peter~L Bartlett.
\newblock Improved bounds for discretization of langevin diffusions:
  Near-optimal rates without convexity.
\newblock \emph{Bernoulli}, 28\penalty0 (3):\penalty0 1577--1601,
  2022{\natexlab{b}}.

\bibitem[Neal(2011)]{neal2011mcmc}
Radford~M Neal.
\newblock {MCMC} using hamiltonian dynamics.
\newblock \emph{Handbook of Markov Chain Monte Carlo}, 2\penalty0
  (11):\penalty0 2, 2011.

\bibitem[Nesterov(2015)]{nesterov2015universal}
Yu~Nesterov.
\newblock Universal gradient methods for convex optimization problems.
\newblock \emph{Mathematical Programming}, 152\penalty0 (1):\penalty0 381--404,
  2015.

\bibitem[Nguyen et~al.(2021)Nguyen, Dang, and Chen]{NguDanChe21}
Dao Nguyen, Xin Dang, and Yixin Chen.
\newblock Unadjusted {L}angevin algorithm for non-convex weakly smooth
  potentials.
\newblock \emph{arXiv preprint arXiv:2101.06369}, 2021.

\bibitem[Parikh \& Boyd(2014)Parikh and Boyd]{ParBoy14}
Neal Parikh and Stephen Boyd.
\newblock Proximal algorithms.
\newblock \emph{Foundations and Trends in optimization}, 1\penalty0
  (3):\penalty0 127--239, 2014.

\bibitem[Parisi(1981)]{parisi1981correlation}
Giorgio Parisi.
\newblock Correlation functions and computer simulations.
\newblock \emph{Nuclear Physics B}, 180\penalty0 (3):\penalty0 378--384, 1981.

\bibitem[Raginsky et~al.(2017)Raginsky, Rakhlin, and
  Telgarsky]{raginsky2017non}
Maxim Raginsky, Alexander Rakhlin, and Matus Telgarsky.
\newblock Non-convex learning via stochastic gradient langevin dynamics: a
  nonasymptotic analysis.
\newblock In \emph{Conference on Learning Theory}, pp.\  1674--1703. PMLR,
  2017.

\bibitem[R{\'e}nyi(1961)]{Ren61}
Alfr{\'e}d R{\'e}nyi.
\newblock On measures of entropy and information.
\newblock In \emph{Proceedings of the fourth Berkeley symposium on mathematical
  statistics and probability}, volume~1. Berkeley, California, USA, 1961.

\bibitem[Roberts \& Stramer(2002)Roberts and Stramer]{roberts2002langevin}
Gareth~O Roberts and Osnat Stramer.
\newblock Langevin diffusions and {Metropolis-Hastings} algorithms.
\newblock \emph{Methodology and Computing in Applied Probability}, 4\penalty0
  (4):\penalty0 337--357, 2002.

\bibitem[Roberts \& Tweedie(1996)Roberts and Tweedie]{roberts1996exponential}
Gareth~O Roberts and Richard~L Tweedie.
\newblock Exponential convergence of {L}angevin distributions and their
  discrete approximations.
\newblock \emph{Bernoulli}, pp.\  341--363, 1996.

\bibitem[Salim \& Richt{\'a}rik(2020)Salim and Richt{\'a}rik]{salim2020primal}
Adil Salim and Peter Richt{\'a}rik.
\newblock Primal dual interpretation of the proximal stochastic gradient
  {L}angevin algorithm.
\newblock \emph{Advances in Neural Information Processing Systems},
  33:\penalty0 3786--3796, 2020.

\bibitem[Shen et~al.(2020)Shen, Tian, and Lee]{shen2020composite}
Ruoqi Shen, Kevin Tian, and Yin~Tat Lee.
\newblock Composite logconcave sampling with a {Restricted Gaussian Oracle}.
\newblock \emph{Available on arXiv:2006.05976}, 2020.

\bibitem[Sites~Jr \& Marshall(2003)Sites~Jr and Marshall]{sites2003delimiting}
Jack~W Sites~Jr and Jonathon~C Marshall.
\newblock Delimiting species: a renaissance issue in systematic biology.
\newblock \emph{Trends in Ecology \& Evolution}, 18\penalty0 (9):\penalty0
  462--470, 2003.

\bibitem[Vempala \& Wibisono(2019)Vempala and Wibisono]{vempala2019rapid}
Santosh Vempala and Andre Wibisono.
\newblock Rapid convergence of the unadjusted langevin algorithm: Isoperimetry
  suffices.
\newblock \emph{Advances in neural information processing systems}, 32, 2019.

\bibitem[Villani(2021)]{Vil21}
C{\'e}dric Villani.
\newblock \emph{Topics in optimal transportation}, volume~58.
\newblock American Mathematical Soc., 2021.

\bibitem[Wibisono(2018)]{wibisono2018sampling}
Andre Wibisono.
\newblock Sampling as optimization in the space of measures: The langevin
  dynamics as a composite optimization problem.
\newblock In \emph{Conference on Learning Theory}, pp.\  2093--3027. PMLR,
  2018.

\bibitem[Wibisono(2019)]{wibisono2019proximal}
Andre Wibisono.
\newblock Proximal langevin algorithm: Rapid convergence under isoperimetry.
\newblock \emph{arXiv preprint arXiv:1911.01469}, 2019.

\end{thebibliography}
\bibliographystyle{tmlr}

\newpage
\appendix
\tableofcontents
\newpage
\section{Technical results}\label{sec:technical}

This section collects the definition of Frechet subdifferential and a few technical results that are useful for the analysis of RGO in Section \ref{sec:semismooth}.

\begin{definition}[Frechet subdifferential]\label{def:Frechet}
 Let $f: \mathbb{R}^{n} \rightarrow \mathbb{R} \cup\{\infty\}$ be a proper closed function, then the Frechet subdifferential is defined as 
\[
\tilde \partial f(x)=\left\{v \in \R^n: \liminf _{y \rightarrow x} \frac{f(y)-f(x)-\left\langle v, y-x\right\rangle}{\|y-x\|} \geq 0\right\}.
\]
\label{def:Fre_epsilon_subdifferential}
\end{definition}

The first result is the well-known Gaussian integral.

\begin{lemma}\label{lem:Gaussian}
A useful Gaussian integral: for any $\eta>0$,
    \[
    \int_{\R^d} \exp\left(-\frac{1}{2\eta}\|x\|^2\right)\rd x = (2\pi \eta)^{d/2}.
    \]
\end{lemma}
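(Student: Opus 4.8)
The plan is to reduce to the one-dimensional case by Fubini's theorem and then evaluate the scalar Gaussian integral by the classical polar-coordinate trick. Writing $x=(x_1,\dots,x_d)$, the integrand factors as $\exp(-\|x\|^2/(2\eta))=\prod_{i=1}^d \exp(-x_i^2/(2\eta))$, so by Tonelli's theorem (the integrand is nonnegative) the $d$-dimensional integral equals $I^d$, where
\[
    I := \int_{\R} \exp\left(-\frac{t^2}{2\eta}\right)\rd t .
\]
Thus it suffices to show $I=\sqrt{2\pi\eta}$.

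To compute $I$, I would square it and apply Tonelli again:
\[
    I^2 = \int_{\R}\int_{\R}\exp\left(-\frac{s^2+t^2}{2\eta}\right)\rd s\,\rd t
        = \int_{\R^2}\exp\left(-\frac{\|z\|^2}{2\eta}\right)\rd z .
\]
Passing to polar coordinates $z=(r\cos\theta,r\sin\theta)$ with $\rd z = r\,\rd r\,\rd\theta$ gives
\[
    I^2 = \int_0^{2\pi}\!\!\int_0^\infty \exp\left(-\frac{r^2}{2\eta}\right) r\,\rd r\,\rd\theta
        = 2\pi\int_0^\infty \exp\left(-\frac{r^2}{2\eta}\right) r\,\rd r .
\]
The substitution $u=r^2/(2\eta)$, $\rd u = r\,\rd r/\eta$, turns the inner integral into $\eta\int_0^\infty e^{-u}\rd u=\eta$, so $I^2=2\pi\eta$ and hence $I=\sqrt{2\pi\eta}$ (taking the positive root, since $I>0$). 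Substituting back, $\int_{\R^d}\exp(-\|x\|^2/(2\eta))\,\rd x = I^d = (2\pi\eta)^{d/2}$, as claimed.

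There is no real obstacle here; the only points requiring a word of justification are the interchange of integration order (legitimate by Tonelli since everything is nonnegative and measurable) and the change of variables to polar coordinates. Alternatively, one could derive the same conclusion from the spherical-coordinate computation already carried out in the proof of Proposition~\ref{prop:key}: taking $a=0$ there in \eqref{eq:int} and using $F_{d-1,\eta}(0)=2^{d/2-1}\eta^{d/2}\Gamma(d/2)$ from \eqref{eq:F} yields $\int_{\R^d}\exp(-\|x\|^2/(2\eta))\,\rd x = \frac{2\pi^{d/2}}{\Gamma(d/2)}\cdot 2^{d/2-1}\eta^{d/2}\Gamma(d/2)=(2\pi\eta)^{d/2}$, which avoids repeating the polar-coordinate argument but relies on the surface-area formula for the unit sphere. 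I would present the self-contained Fubini-plus-polar-coordinates argument as the main proof.
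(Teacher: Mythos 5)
Your proof is correct. The paper states Lemma \ref{lem:Gaussian} without proof, treating it as the classical Gaussian integral, and your Tonelli-plus-polar-coordinates derivation is the standard, complete justification of it (the alternative route you mention via \eqref{eq:int} and \eqref{eq:F} with $a=0$ is also consistent with the paper's computations). Nothing is missing.
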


The following lemma shows that $f_y^\eta$ as in \eqref{eq:fy-eta} is close to a strongly convex and smooth  function when $\eta$ is small. This result is used in Proposition \ref{prop:opt}.

\begin{lemma}\label{lem:fyeta}
Let $f_y^\eta:=f+\frac{1}{2\eta}\|\cdot-y\|^2$ and $(f_y^{\eta})':=f'+\frac{1}{\eta}(\cdot-y)$, then we have for every $u, v \in \R^d$,
\begin{align*}
    \frac{1}{2} \left( \frac{1}{\eta} - M\right)\|u-v\|^2 - \frac{(1-\alpha)\delta}{2} &\le f_y^\eta(u) - f_y^\eta(v) - \inner{(f_y^\eta)'(v)}{u-v} \\
&\le \frac{1}{2} \left( \frac{1}{\eta} + M\right)\|u-v\|^2 + \frac{(1-\alpha)\delta}{2}.
\end{align*}
\end{lemma}

\begin{proof}
Using the definitions of $f_y^\eta$ and $(f_y^\eta)'$, we have
\begin{align*}
    &f_y^\eta(u) - f_y^\eta(v) - \inner{(f_y^\eta)'(v)}{u-v} \\
    =& f(u) - f(v) - \inner{f'(v)}{u-v} + \frac{1}{2\eta}\|u-y\|^2 - \frac{1}{2\eta}\|v-y\|^2 - \frac{1}{\eta}\inner{v-y}{u-v} \\
    = & f(u) - f(v) - \inner{f'(v)}{u-v} + \frac{1}{2\eta}\|u-v\|^2.
\end{align*}
The lemma now follows from above identity and \eqref{ineq:smooth }.
\end{proof}

The next lemma gives equivalent formulas of $\int \exp(-h_{1,y}^w(x)) \rd x$ and $\int \exp(-h_{2,y}^{w^*}(x)) \rd x$ that are useful in Proposition \ref{prop:exp}.

\begin{lemma}\label{lem:int}
Recall $w$ and $w^*$ are defined in Proposition \ref{prop:opt} and Lemma \ref{lem:h1h2}, respectively. Let $h_{1,y}^w$ and $h_{2,y}^{w^*}$ be as in \eqref{def:h1} and \eqref{def:h2}, respectively. Then, we have the following integrals
\begin{align}
     \int \exp(-h_{1,y}^w(x)) \rd x
    &= \left(\frac{2\pi \eta}{1-\eta M} \right)^{d/2}
    \exp\left(H_1(w) \right), \label{eq:h1-int}\\
    \int \exp(-h_{2,y}^{w^*}(x)) \rd x
    &= \left(\frac{2\pi \eta}{1+\eta M} \right)^{d/2}
    \exp\left( H_2(w^*) \right), \label{eq:h2-int}
\end{align}
where
\begin{align}
    H_1(w)&=  \frac{1}{2\eta}\|w\|^2 + \frac{\eta}{2(1-\eta M)} \|s\|^2 - f(w) + \frac1\eta\inner{w}{y-w} - \frac{1}{2\eta}\|y\|^2 + \frac{1-\alpha}{2}\delta, \label{def:H1}\\
    H_2(w^*)&= \frac{1}{2\eta}\|w^*\|^2 - f(w^*) + \inner{f'(w^*)}{w^*} - \frac{1}{2\eta}\|y\|^2 - \frac{1-\alpha}{2}\delta. \label{def:H2}
\end{align}
\end{lemma}

\begin{proof}
We first rewrite $h_{1,y}^w$ and $h_{2,y}^{w^*}$ as follows
\begin{align}
   h_{1,y}^w(x)=&f(w) + \inner{f'(w)}{x-w} - \frac{M}{2}\|x -w\|^2 + \frac{1}{2\eta}\|x -y\|^2
	- \frac{(1-\alpha)\delta}{2} \nn \\
	=&\frac{1-\eta M}{2\eta}\left\|x+ \frac{\eta M}{1-\eta M} w - \frac{1}{1-\eta M} y + \frac{\eta}{1-\eta M} f'(w) \right\|^2 \nn \\
	&- \frac{1}{2\eta(1-\eta M)}\|\eta M w - y + \eta f'(w)\|^2 \nn \\
	&+ f(w) - \inner{f'(w)}{w} -\frac{M}{2}\|w\|^2 + \frac{1}{2\eta}\|y\|^2 - \frac{1-\alpha}{2}\delta, \label{eq:h1-equiv}
\end{align}
and
\begin{align}
   h_{2,y}^{w^*}(x)=&f(w^*) + \inner{f'(w^*)}{x-w^*} + \frac{M}{2}\|x -w^*\|^2 + \frac{1}{2\eta}\|x -y\|^2
	+ \frac{(1-\alpha)\delta}{2} \nn \\
	=&\frac{1+\eta M}{2\eta}\left\|x- \frac{\eta M}{1+\eta M} w^* - \frac{1}{1+\eta M} y + \frac{\eta}{1+\eta M} f'(w^*) \right\|^2 \nn \\
	&- \frac{1}{2\eta(1+\eta M)}\|\eta M w^* + y - \eta f'(w^*)\|^2 \nn \\
	&+ f(w^*) - \inner{f'(w^*)}{w^*} + \frac{M}{2}\|w^*\|^2 + \frac{1}{2\eta}\|y\|^2 + \frac{1-\alpha}{2}\delta. \label{eq:h2-equiv}
\end{align}
It follows from \eqref{eq:h1-equiv} and Lemma \ref{lem:Gaussian} that
\begin{align}
    \int \exp(-h_{1,y}^w(x))\rd x
    &= \left(\frac{2\pi \eta}{1-\eta M} \right)^{d/2} \exp( \hat H_1(w)), \label{eq:int-h1}\\
    \int \exp(-h_{2,y}^{w^*}(x))\rd x
    &=  \left(\frac{2\pi \eta}{1+\eta M} \right)^{d/2} \exp(\hat H_2(w^*)), \label{eq:int-h2}
\end{align}
where
\begin{align*}
    \hat H_1(w)=&\frac{1}{2\eta(1-\eta M)}\|\eta M w - y + \eta f'(w)\|^2 - f(w) + \inner{f'(w)}{w} \\
    &+ \frac{M}{2}\|w\|^2 - \frac{1}{2\eta}\|y\|^2 + \frac{1-\alpha}{2}\delta, \\
    \hat H_2(w^*)=&\frac{1}{2\eta(1+\eta M)}\|\eta M w^* + y - \eta f'(w^*)\|^2 - f(w^*) + \inner{f'(w^*)}{w^*} \\
    &- \frac{M}{2}\|w^*\|^2 - \frac{1}{2\eta}\|y\|^2 - \frac{1-\alpha}{2}\delta.
\end{align*}
It suffices to show that $\hat H_1(w)= H_1(w)$ and $\hat H_2(w^*)= H_2(w^*)$ to complete the proof.

We first verify that $\hat H_1(w)= H_1(w)$.
Using the definition of $\hat H_1(w)$ above and the definition of $s$ in \eqref{ineq:grad}, we have
\begin{align*}
    \hat H_1(w)
    =&\frac{1}{2\eta(1-\eta M)}\|\eta M w -w +w- y + \eta f'(w)\|^2 - f(w) + \inner{f'(w)}{w} \\
    &+ \frac{M}{2}\|w\|^2 - \frac{1}{2\eta}\|y\|^2 + \frac{1-\alpha}{2}\delta\\
    =&\frac{1}{2\eta(1-\eta M)}\|(\eta M-1) w + \eta s\|^2 - f(w) + \inner{f'(w)}{w}\\
    &+ \frac{M}{2}\|w\|^2 - \frac{1}{2\eta}\|y\|^2 + \frac{1-\alpha}{2}\delta\\
    =&\frac{1-\eta M}{2\eta}\|w\|^2 + \frac{M}{2}\|w\|^2 - \inner{w}{s} + \frac{\eta}{2(1-\eta M)} \|s\|^2 - f(w) + \inner{f'(w)}{w}\\
    &- \frac{1}{2\eta}\|y\|^2 + \frac{1-\alpha}{2}\delta\\
    =&\frac{1}{2\eta}\|w\|^2 + \frac{\eta}{2(1-\eta M)} \|s\|^2 - f(w) + \frac1\eta\inner{w}{y-w} - \frac{1}{2\eta}\|y\|^2 + \frac{1-\alpha}{2}\delta.
\end{align*}
In view of \eqref{def:H1}, we verify that $\hat H_1(w)= H_1(w)$ and hence \eqref{eq:h1-int} is proved.

We next verify that $\hat H_2(w^*)= H_2(w^*)$.
Using the definition of $\hat H_2(w^*)$ and \eqref{eq:stationary}, we have
\begin{align*}
    \hat H_2(w^*)
    =&\frac{1}{2\eta(1+\eta M)}\|\eta M w^* +w^* -w^*+ y - \eta f'(w^*)\|^2 - f(w^*) + \inner{f'(w^*)}{w^*} \\
    &- \frac{M}{2}\|w^*\|^2 - \frac{1}{2\eta}\|y\|^2 - \frac{1-\alpha}{2}\delta\\
    =&\frac{1}{2\eta(1+\eta M)}\|(\eta M+1) w^*\|^2 - \frac{M}{2}\|w^*\|^2 - f(w^*) + \inner{f'(w^*)}{w^*}\\
    &- \frac{1}{2\eta}\|y\|^2 - \frac{1-\alpha}{2}\delta\\
    =&\frac{1}{2\eta}\|w^*\|^2 - f(w^*) + \inner{f'(w^*)}{w^*} - \frac{1}{2\eta}\|y\|^2 - \frac{1-\alpha}{2}\delta.
\end{align*}
In view of \eqref{def:H2}, we verify that $\hat H_2(w^*)= H_2(w^*)$ and hence \eqref{eq:h2-int} is proved.
\end{proof}

\section{Missing proofs} \label{sec:proof}

\subsection{\bf Proof of Lemma \ref{lem:property}}
It follows from the assumption that $f$ is $L_\alpha$-semi-smooth  that for every $u,v \in \R^d$,
\begin{equation}\label{ineq:semismooth }
    |f(u) - f(v) - \inner{f'(v)}{u-v}| \le \frac{L_\alpha}{\alpha+1} \|u-v\|^{\alpha+1}.
\end{equation}
Using the Young's inequality $ab\le a^p/p + b^q/q$ with 
	\[
	a = \frac{L_\alpha}{(\alpha+1)\delta^{\frac{1-\alpha}{2}}}\|u-v\|^{\alpha+1}, \quad b= \delta^{\frac{1-\alpha}{2}}, \quad p= \frac{2}{\alpha+1}, \quad q= \frac{2}{1-\alpha}, 
	\]
	we obtain
	\[
	\frac{L_\alpha}{\alpha+1}\|u-v\|^{\alpha+1} \le
	\frac{L_\alpha^{\frac{2}{\alpha+1}} }{2 [(\alpha+1)\delta]^{\frac{1-\alpha}{\alpha+1}} }\|u -v\|^2
	+ \frac{(1-\alpha)\delta}{2}.
	\]
Plugging the above inequality into \eqref{ineq:semismooth }, we have
\[
|f(u) - f(v) - \inner{f'(v)}{u-v}| \le \frac{L_\alpha^{\frac{2}{\alpha+1}} }{2 [(\alpha+1)\delta]^{\frac{1-\alpha}{\alpha+1}} }\|u -v\|^2
	+ \frac{(1-\alpha)\delta}{2}.
\]
This inequality and the definition of $M$ in \eqref{def:M} imply \eqref{ineq:smooth }.
\QEDA

\vgap

\subsection{\bf Proof of Proposition \ref{prop:opt}}
It follows from Lemma \ref{lem:fyeta} that $f_y^\eta$ satisfies \eqref{ineq:theta} with \begin{equation}\label{eq:value}
    \mu=\frac{1}{\eta}-M, \quad L = \frac{1}{\eta}+M, \quad \theta=\frac{(1-\alpha)\delta}2.
\end{equation}
Since $\eta\le 1/(Md)$, it is easy to verify that the assumption on $\rho$ (i.e., $\sqrt{Md}$ in our case) in Proposition \ref{prop:k0} is satisfied.
Hence, it follows from Proposition \ref{prop:k0} and \eqref{eq:value} that the proposition holds.
\QEDA

\section{Sampling from regularized convex semi-smooth potentials}\label{sec:regularization}

This section presents an alternative approach via regularization for sampling in the same setting as in Section~\ref{sec:convex}, i.e., $f$ is convex and satisfies \eqref{ineq:semi-smooth}.
More specifically, the alternative complexity result is obtained by first applying Theorem \ref{thm:outer} on a regularized convex semi-smooth potential and then specifying the regularization parameter.

We consider the following regularized potential with some $\mu>0$,
\begin{equation}\label{def:g}
    \hat f(\cdot) = f(\cdot) + \frac{\mu}{2}\|\cdot-x^0\|^2,
\end{equation}
which is clearly $\mu$-strongly convex by construction. Hence, $\exp(-\hat f)$ satisfies $\mu$-LSI and Theorem \ref{thm:outer} is applicable.
Since $\hat f$ is $\mu$-strongly convex, improved versions of results in Section~\ref{sec:multiple} can be obtained.
We omit the proofs since they can be easily obtained by following similar ideas in the proofs given in Section \ref{sec:semismooth}.

\begin{lemma}\label{lem:property1}
Assume $f$ is a convex function and satisfies \eqref{ineq:semi-smooth}, then for $\delta>0$ and every $u,v \in \R^d$, we have
\begin{align*}
    &\hat f(u) - \hat f(v) - \inner{\hat f'(v)}{u-v} \le \frac{M_n+\mu}{2}\|u -v\|^2+ \sum_{i=1}^n \frac{(1-\alpha_i)\delta}{2},\\
    &\hat f(u) - \hat f(v) - \inner{\hat f'(v)}{u-v} \ge \frac{\mu}{2}\|u -v\|^2,
\end{align*}
where $\hat f$ and $M_n$ are as in \eqref{def:g} and \eqref{def:M-n}, respectively.
\end{lemma}

\begin{proposition}\label{prop:opt1}
Assume $\eta\le \frac{1}{M_nd}$, and let $w\in \R^d$ be an approximate stationary point of 
\begin{equation}\label{eq:gy-eta}
    \min_{x\in\R^d}\left\{\hat f_y^\eta(x):= \hat f(x) + \frac{1}{2\eta}\|x-y\|^2\right\},
\end{equation} 
i.e.,
\begin{equation}\label{ineq:grad1}
    \|s\|\le \sqrt{M_nd}, \quad s=\hat f'(w)+\frac{1}{\eta}(w-y),
\end{equation}
where $M_n$ is as in \eqref{def:M-n}.
Then, the iteration-complexity to find $w$ by using Algorithm \ref{alg:AG} is $\tilde{\cal O}(1)$.
\end{proposition}

\begin{lemma}\label{lem:h1h2-1}
Let $w^*\in \R^d$ be a stationary point of $\hat f_y^\eta$, i.e.,
\[
    \hat f'(w^*) + \frac{1}{\eta}(w^*-y)=0.
\]
Define
\begin{align*}
    \hat h_{1,y}^w(x)&:=\hat f(w) + \inner{\hat f'(w)}{x-w} + \frac{\mu}{2}\|x -w\|^2 + \frac{1}{2\eta}\|x -y\|^2,\\
	\hat h_{2,y}^{w^*}(x)&:=\hat f(w^*) + \inner{\hat f'(w^*)}{x-w^*} + \frac{M_n+\mu}{2}\|x -w^*\|^2 + \frac{1}{2\eta}\|x -y\|^2 + \sum_{i=1}^n \frac{(1-\alpha_i)\delta}{2}, 
\end{align*}
where $w$ is as in \eqref{ineq:grad1}.
Then, we have for every $x\in \R^d$,
\[
    \hat h_{1,y}^w(x) \le \hat f_y^\eta(x)\le \hat h_{2,y}^{w^*}(x).
\]
\end{lemma}

\begin{proposition}\label{prop:exp-1}
Assume $f$ is convex and satisfies \eqref{ineq:semi-smooth} and let $\hat f_y^\eta$ be as in \eqref{eq:gy-eta}. Then $X$ generated by Algorithm \ref{alg:RGO-nonconvex} (with 
\eqref{ineq:grad}, $f_y^{\eta}$, and $h_{1,y}^w(x)$ replaced by \eqref{ineq:grad1}, $\hat f_y^{\eta}$, and $\hat h_{1,y}^w(x)$, respectively) follows the distribution
$\pi^{X|Y}(x\mid y) \propto \exp\left(-\hat f_y^\eta(x)\right)$.
Moreover, if $ \eta \le \frac{1}{M_nd}$,
then the expected number of rejection steps in Algorithm \ref{alg:RGO-nonconvex} is at most $\exp\left(\sum_{i=1}^n \frac{(1-\alpha_i)\delta}{2} + 1 \right)$.
\end{proposition}

\begin{proposition}\label{thm:g}
Assume $f$ is convex and satisfies \eqref{ineq:semi-smooth},
% $L_\alpha$-semi-smooth 
and let $\hat f$ be as in \eqref{def:g}.
With initial distribution $\rho_0^X$ and stepsize $\eta \asymp 1/\left(\sum_{i=1}^n L_{\alpha_i}^{\frac{2}{\alpha_i+1}} d \right)$, Algorithm \ref{alg:ASF} using Algorithm \ref{alg:RGO-nonconvex} as an RGO achieves $\varepsilon$ error in terms of KL divergence with respect to  $\exp(-\hat f)$ in
\begin{equation}\label{cmplx:mu}
    \tilde {\cal O}\left(
\frac{\sum_{i=1}^n L_{\alpha_i}^{\frac{2}{\alpha_i+1}} d}{\mu} \right)
\end{equation}
iteration,
and each iteration queries $\tilde {\cal O}(1)$ subgradient oracle of $f$ and ${\cal O}(1)$ Gaussian distribution sampling oracle.
% and solves a quadratic programming problem. %{\chen Maybe we can also present the complexity when the proximal map is available}
% Moreover, the number of Gaussian distribution sampling queries in Algorithm \ref{alg:ASF} is
% \begin{equation}\label{cmplx:outer}
% \Theta \left(\frac{1}{\eta \mu} \log\left( \frac{d}{\eta \mu \varepsilon}\right) + 1 \right).
% \end{equation}
\end{proposition}

\begin{proof}
    By construction, $\hat f$ is $\mu$-strongly convex and $\exp(-\hat f)$ satisfies $\mu$-LSI. Using Theorem \ref{thm:outer}, starting from initial distribution $\rho_0^X$, it takes 
\begin{equation}\label{eq:iteration}
        \frac{1}{2\mu \eta} \log \frac{H_{\pi^X}(\rho_0^X)}{\varepsilon}
    \end{equation}
iterations for Algorithm \ref{alg:ASF} to achieve $\varepsilon$ error to $\exp(-\hat f)$ with respect to KL divergence. 

By Propositions \ref{prop:opt1} and \ref{prop:exp-1}, Algorithm \ref{alg:RGO-nonconvex} queries $\tilde {\cal O}(1)$ subgradient oracle of $f$ and ${\cal O}(1)$ Gaussian distribution sampling oracle. Complexity \eqref{cmplx:mu} then follows by plugging $\eta$ into \eqref{eq:iteration}.
\end{proof}

Building upon Proposition \ref{thm:g} for sampling from $\exp(-\hat f(x))=\exp(-f(x) - \mu\|x-x^0\|^2/2)$, we establish a complexity result, in the following theorem, to sample from the original target distribution $\pi^X\propto \exp(-f)$ by choosing a proper regularization constant $\mu$.

\begin{theorem}\label{thm:f}
Let $\pi^X\propto \exp(-f(x))$ be the target distribution where $f$ is convex and satisfies \eqref{ineq:semi-smooth}. Let $x^0\in\R^d$ and $\varepsilon>0$ be given and set
\begin{equation}\label{eq:mu}
    \mu = \frac{\varepsilon}{\sqrt{2}\left(\sqrt{{\cal M}_4} + \|x^0-x_\text{min}\|^2 \right)}
\end{equation}
where ${\cal M}_4=\int_{\R^d}\|x-x_\text{min}\|^4 d\pi^X(x)$ and $x_\text{min} \in \Argmin \{f(x): x\in \R^d\}$.
With initial distribution $\rho_0^X$ and stepsize $\eta \asymp 1/\left(\sum_{i=1}^n L_{\alpha_i}^{\frac{2}{\alpha_i+1}} d \right)$, Algorithm \ref{alg:ASF} using Algorithm \ref{alg:RGO-nonconvex} as an RGO for step 1, applied to $\nu\propto\exp(-\hat f)=\exp(-f-\mu\|\cdot-x^0\|^2/2)$ has the iteration-complexity bound
\begin{equation}\label{cmplx:mu1}
    \tilde {\cal O}\left(
\frac{\sum_{i=1}^n L_{\alpha_i}^{\frac{2}{\alpha_i+1}} d\left(\sqrt{{\cal M}_4} + \|x^0-x_\text{min}\|^2 \right) }{\varepsilon}\right)
\end{equation}
to achieve $\varepsilon$ error to $\pi^X$ in terms of total variation.
\end{theorem}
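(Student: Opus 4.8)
The plan is to combine Theorem~\ref{thm:g} (the complexity of sampling from the regularized potential $g = f + \frac{\mu}{2}\|\cdot - x^0\|^2$ in KL divergence) with a bound on the discrepancy between the regularized target $\pi^X \propto \exp(-g)$ and the original target $\nu \propto \exp(-f)$, controlled by the regularization strength $\mu$. First I would quantify the bias introduced by the quadratic regularization: I expect $W_2^2(\pi^X, \nu)$ (or a divergence between them) to be bounded by something of order $\mu \cdot (\text{second/fourth moments of } \nu \text{ and } \|x^0 - x_{\text{min}}\|^2)$. Concretely, since $\pi^X$ is obtained from $\nu$ by tilting with $\exp(-\frac{\mu}{2}\|x - x^0\|^2)$, a standard computation (e.g. via an interpolation/transport argument or a direct estimate of the KL divergence $H_\nu(\pi^X)$ using $\frac{\mu}{2}\|x-x^0\|^2$ and Cauchy--Schwarz on the moments) should give a bound like $H_\nu(\pi^X) = \tilde{\mathcal O}(\mu(\sqrt{{\cal M}_4} + \|x^0 - x_{\text{min}}\|^2))$ or an analogous $W_2^2$ estimate. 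The specific form of $\mu$ in \eqref{eq:mu} is chosen precisely so that this bias term is $\mathcal O(\varepsilon^2)$, which after Pinsker's inequality (converting KL to total variation) yields $\mathcal O(\varepsilon)$ in TV.

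The second ingredient is the sampling error for the regularized target. By Theorem~\ref{thm:g}, with the parameter choices \eqref{ineq:eta}, the ASF with Algorithm~\ref{alg:RGO-bundle} reaches KL error $\varepsilon'$ to $\pi^X$ in $\tilde{\mathcal O}(L_\alpha^{2/(\alpha+1)} d / \mu)$ iterations, each costing $\mathcal O(1)$ subgradient queries. Setting $\varepsilon' = \mathcal O(\varepsilon^2)$ so that, again via Pinsker, the sampling error contributes $\mathcal O(\varepsilon)$ in TV, and substituting the value of $\mu$ from \eqref{eq:mu} into the iteration count $\tilde{\mathcal O}(L_\alpha^{2/(\alpha+1)} d/\mu)$, I get exactly the claimed bound \eqref{cmplx:mu1}: the $1/\mu$ becomes $\tilde{\mathcal O}((\sqrt{{\cal M}_4} + \|x^0 - x_{\text{min}}\|^2)/\varepsilon)$. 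Finally, I would assemble the pieces with a triangle inequality in TV: $\|\rho_k^X - \nu\|_{\mathrm{TV}} \le \|\rho_k^X - \pi^X\|_{\mathrm{TV}} + \|\pi^X - \nu\|_{\mathrm{TV}} \le \varepsilon/2 + \varepsilon/2 = \varepsilon$, with the $\log$ factors from the initialization $H_{\pi^X}(\rho_0^X)$ absorbed into the $\tilde{\mathcal O}$.

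The main obstacle I anticipate is the bias estimate $\|\pi^X - \nu\|_{\mathrm{TV}} \lesssim \varepsilon$ under the choice \eqref{eq:mu}. One has to be careful about which direction of KL (or which metric) is easiest to control and to ensure the moment quantities that appear are exactly ${\cal M}_4 = \int \|x - x_{\text{min}}\|^4 \,d\nu$ and $\|x^0 - x_{\text{min}}\|^2$ and not, say, moments of the regularized measure (which would be circular). A clean route is: write the density ratio $d\pi^X/d\nu \propto \exp(-\frac{\mu}{2}\|x - x^0\|^2)$, bound the normalizing constant from below by Jensen, and then estimate $H_\nu(\pi^X) = \int (\log \frac{d\pi^X}{d\nu}) d\pi^X$ — but since $\pi^X$ moments are what we want to avoid, it may be cleaner to bound $H_{\pi^X}(\nu)$ or to use a $W_2$ coupling bound and Talagrand-type inequality, or simply to invoke a lemma of this type already established in the authors' prior work \cite{liang2021proximal}. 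I would also need the elementary inequality relating $\|x^0 - x\|^2$ to $\|x - x_{\text{min}}\|^2 + \|x^0 - x_{\text{min}}\|^2$ (up to a factor of $2$) and then Cauchy--Schwarz to pass from the square to $\sqrt{{\cal M}_4}$. Once the bias lemma is in hand, the rest is bookkeeping of constants and $\log$ factors inside $\tilde{\mathcal O}$.
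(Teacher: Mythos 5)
Your proposal follows essentially the same route as the paper: a TV triangle inequality splitting the error into the regularization bias and the sampling error, with the bias controlled by the choice of $\mu$ in \eqref{eq:mu} via the elementary inequality $\|x-x^0\|^2\le 2\|x-x_{\min}\|^2+2\|x_{\min}-x^0\|^2$ and Cauchy--Schwarz, and the sampling error handled by Theorem~\ref{thm:g} plus Pinsker (with $\varepsilon^2$ only affecting the logarithmic factors). The one ingredient you left open --- the precise bias lemma --- is resolved in the paper by the direct perturbation bound $\|\pi^X-\nu\|_{\mathrm{TV}}\le \tfrac12\bigl(\int (f-g)^2\,\rd\nu\bigr)^{1/2}$ from Corollary 4.1 of \cite{chatterji2020langevin}, which is one of the routes you floated and involves only moments of $\nu$, avoiding the circularity you were worried about.
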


\begin{proof}
    Let $\rho^X$ denote the distribution of the samples generated by Algorithm \ref{alg:ASF} using Algorithm \ref{alg:RGO-nonconvex} as an RGO.
Following the proof of Corollary 4.1 of \citet{chatterji2020langevin}, we obtain
\[
\|\rho^X- \pi^X\|_{\text{TV}} \le \|\rho^X-\nu\|_{\text{TV}} + \|\nu - \pi^X\|_{\text{TV}}
\]
and
\begin{align*}
    \|\pi^X &- \nu\|_{\text{TV}}  \le \frac12 \left( \int_{\R^d} [f(x)-\hat f(x)]^2 \rd \pi^X(x) \right)^{1/2} = \frac12 \left( \int_{\R^d} \left(\frac{\mu}{2}\|x-x^0\|^2 \right) ^2 \rd \pi^X(x) \right)^{1/2}\\
    &\le \frac\mu2 \left( \int_{\R^d} \left(\|x-x_\text{min}\|^2 + \|x_\text{min}-x^0\|^2 \right)^2 \rd \pi^X(x) \right)^{1/2} \\
    &\le \frac{\mu}2 \left( \int_{\R^d} \left(2\|x-x_\text{min}\|^4 + 2\|x_\text{min}-x^0\|^4 \right) \rd \pi^X(x) \right)^{1/2} \\
    &= \frac{\sqrt{2}\mu}2 \left( {\cal M}_4 + \|x_\text{min}-x^0\|^4 \right)^{1/2} 
    \le \frac{\sqrt{2}\mu}{2} \left(\sqrt{{\cal M}_4} + \|x_0-x_\text{min}\|^2 \right) = \frac{\varepsilon}{2}
\end{align*}
where the last identity is due to the definition of $\mu$ in \eqref{eq:mu}.
Hence, it suffices to derive the iteration-complexity bound for Algorithm \ref{alg:ASF} to achieve $\|\rho^X-\pi^X\|_{\text{TV}}\le \varepsilon/2$, which is \eqref{cmplx:mu1} in view of Proposition \ref{thm:g} with $\mu$ as in \eqref{eq:mu}. Note that even though the complexity result in Proposition \ref{thm:g} is with respect to KL divergence, one can get the same order of complexity with respect to total variation using Pinsker inequality.
\end{proof}

% Finally, we remark that \eqref{ineq:eta} implies that $\delta \le 1/(64 d)$, and if we choose $\delta = C d^{-1}$ for some universal constant $C>0$, then the total complexity \eqref{cmplx:mu1} for sampling from non-smooth potentials becomes $\tilde {\cal O}(M^2d\sqrt{{\cal M}_4}\varepsilon^{-1})$.

\section{Solving the optimization problem}\label{sec:opt}

In this section, we use Nesterov's acceleration to establish the iteration-complexity for solving a general optimization problem that is nearly strongly convex and nearly smooth.
This general result is then applied in Section \ref{sec:semismooth} to find an approximate stationary point of $f_y^\eta$ (see Proposition \ref{prop:opt}).

We consider the optimization problem $\min\{g(x):x\in \R^d\}$, where $g$ satisfies
\begin{equation}\label{ineq:theta}
    \frac{\mu}{2}\|u-v\|^2 - \theta \le g(u) - g(v) - \inner{g'(v)}{u-v} \le \frac{L}{2}\|u-v\|^2 + \theta, \quad \forall u, v \in \R^d,
\end{equation}
for some given $\theta>0$, $\mu\ge 0$, and $L\ge 0$.
We use Nesterov's accelerated gradient method to find a $\rho$-approximate stationary point $w$ such that $\|g'(w)\| \le \rho$. We also establish the iteration-complexity to obtain a $\rho$-approximate stationary point.

\begin{algorithm}[H]
	\caption{Accelerated Gradient Method}\label{alg:AG}
	\begin{algorithmic}
		\STATE 0. Let an initial point $ x_0 $, parameters $L, \mu>0 $ be given, and set $ y_0=x_0 $, $ A_0=0 $, $\tau_0=1$, and $ k=0 $;
		\STATE 1. Compute
		\begin{align}
		& a_k=\frac{\tau_k+\sqrt{\tau_k^2+4\tau_kL A_k}}{2L},\quad  A_{k+1}=A_k+a_k, \label{def:ak-new} \\
		& \tau_{k+1}=\tau_k + a_k \mu, \quad \tx_k=\frac{A_ky_k+a_kx_k}{A_{k+1}}; \label{def:tauk}
		\end{align}
		\STATE 2. Compute
		\begin{align}
		y_{k+1}&:=\underset{u\in \R^d}\argmin\left\lbrace \gamma_k(u) + \frac{L}{2}\|u-\tx_k\|^2\right\rbrace, \label{eq:ynext}\\
		x_{k+1}&:= \underset{u\in \R^d}\argmin\left\lbrace a_k \gamma_k(u) + \frac{\tau_k}{2}\|u-x_k\|^2\right\rbrace, \label{eq:xnext}
		\end{align}
		where
		\begin{equation}\label{def:gamma}
			\gamma_k(u) := g(\tx_k) + \inner{g'(\tx_k)}{u-\tx_k} + \frac{\mu}{2}\|u-\tx_k\|^2;
		\end{equation}
		
		\STATE 3. Set $ k \leftarrow k+1 $ and go to step 1.
	\end{algorithmic}
\end{algorithm}

\begin{lemma}
For every $k\ge 0$ and $u\in \R^d$, we have
\begin{equation}\label{ineq:theta-1}
    -\theta \le g(u) - \gamma_k(u) \le \frac{L-\mu}{2}\|u-\tx_k\|^2 + \theta.
\end{equation}
\end{lemma}

\begin{proof}
This lemma directly follows from \eqref{ineq:theta} and the definition of $\gamma_k$ in \eqref{def:gamma}.
\end{proof}

\begin{lemma} \label{lem:Ak}
	For every $ k\ge 0 $, the following statements hold:
	\begin{itemize}
	    \item[a)] $A_{k+1}=L a_k^2/\tau_k$;
	    \item[b)] $A_{k+1}\ge \frac{1}{L} \left( 1+\frac{\sqrt{\mu}}{2\sqrt{L}}\right) ^{2k}$;
	    \item[c)] $\sum_{i=0}^{k} A_{i+1} \ge \frac{\exp(2(k+1)(\beta-1)/\beta)-1}{L(\beta^2-1)}$ where $\beta=1+\frac{\sqrt{\mu}}{2\sqrt{L}}$.
	\end{itemize}
\end{lemma}

\begin{proof}
a) This statement directly follows from \eqref{def:ak-new}.

b) It is easy to see from \eqref{def:ak-new}, $A_0=0$ and $\tau_0=1$ that $A_1=1/L$.
Using the definitions of $A_k$ and $\tau_k$ in \eqref{def:ak-new} and \eqref{def:tauk}, respectively, and the facts that $A_0=0$ and $\tau_0=1$, we easily derive that
\begin{equation}\label{eq:tau}
    \tau_k = \tau_0 + A_k \mu = 1 + A_k \mu.
\end{equation}
It follows from \eqref{def:ak-new} that
\[
A_{k+1}=A_k+a_k = A_k + \frac{\tau_k+\sqrt{\tau_k^2+4\tau_kL A_k}}{2L} \ge A_k + \frac{\tau_k}{2L} + \frac{\sqrt{\tau_k A_k}}{\sqrt{L}} \ge \left( \sqrt{A_k} + \frac{\sqrt{\tau_k}}{2\sqrt{L}} \right)^2.
\]
The above inequality and \eqref{eq:tau} imply
\[
\sqrt{A_{k+1}} \ge \sqrt{A_k} + \frac{\sqrt{\tau_k}}{2\sqrt{L}} = \sqrt{A_k} + \frac{\sqrt{1 + A_k \mu}}{2\sqrt{L}} \ge \left(1 + \frac{\sqrt{\mu}}{2\sqrt{L}}\right) \sqrt{A_k}.
\]
This statement now follows from the above relation and the fact that $A_1=1/L$.

c) Noting from b) that $A_{k+1}\ge \beta^{2k}/L$, which together with the fact $x\ge \exp((x-1)/x)$ for $x\ge 1$, implies that
\[
\sum_{i=0}^{k} A_{i+1} \ge \frac{1}{L} \sum_{i=0}^{k} \beta^{2i} = \frac{\beta^{2(k+1)}-1}{L(\beta^2-1)} \ge \frac{\exp(2(k+1)(\beta-1)/\beta)-1}{L(\beta^2-1)}.
\]
\end{proof}

\begin{lemma}
For every $k\ge 0$, define
\begin{equation}\label{def:tk}
    t_k(u)=A_k\left[ g(y_k) - g(u)\right] + \frac{\tau_k}{2}\|u-x_k\|^2,
\end{equation}
then for every $u\in\R^d$, we have
\begin{equation}\label{ineq:disp}
\frac\mu 2 A_{k+1} \|y_{k+1} - \tx_k\|^2 \le t_k(u) - t_{k+1}(u) + 2 A_{k+1} \theta.
\end{equation}
\end{lemma}

\begin{proof}
Using the fact $\gamma_k$ is convex and the definition of $\tx_k$ in \eqref{def:tauk}, we have
\begin{align*}
    &A_k \gamma_k(y_k) + a_k \gamma_k(u) +  \frac{\tau_k}{2}\|u-x_k\|^2 \\
    \ge& A_{k+1}\gamma_k\left(\frac{A_k y_k + a_k u}{A_{k+1}}\right) + \frac{\tau_k A_{k+1}^2}{2a_k^2}\left\|\frac{A_ky_k + a_k u}{A_{k+1}}-\tx_k \right\|^2 \\
    =& A_{k+1}\left[\gamma_k\left(\frac{A_k y_k + a_k u}{A_{k+1}}\right) + \frac{L}{2}\left\|\frac{A_ky_k + a_k u}{A_{k+1}}-\tx_k \right\|^2 \right] \\
    \ge& A_{k+1} \min \left\{ \gamma_k(u) + \frac{L}{2}\|u-\tx_k\|^2 \right\} \\
    =& A_{k+1} \left[\gamma_k(y_{k+1}) + \frac{L}{2}\|y_{k+1}-\tx_k\|^2 \right],
\end{align*}
where the first identity is due to \eqref{def:ak-new} and the second identity is due to the definition of $y_{k+1}$ in \eqref{eq:ynext}.
It follows from the second inequality of \eqref{ineq:theta-1} with $u=y_{k+1}$ and the above inequality that
\begin{align*}
   & A_{k+1} \left[ g(y_{k+1}) -\theta +\frac \mu 2 \|y_{k+1} - \tx_k\|^2 \right] \\
    &\le  A_{k+1} \left[\gamma_k(y_{k+1}) + \frac{L}{2}\|y_{k+1}-\tx_k\|^2 \right]\\
    &\le  A_k \gamma_k(y_k) + a_k \gamma_k(x_{k+1}) +  \frac{\tau_k}{2}\|x_{k+1}-x_k\|^2 \\
    &\le  A_k \gamma_k(y_k) + a_k \gamma_k(u) +  \frac{\tau_k}{2}\|u-x_k\|^2 - \frac{\tau_{k+1}}{2}\|u-x_{k+1}\|^2
\end{align*}
where the last inequality is due to \eqref{eq:xnext} and the fact that $a_k \gamma_k + \tau_k\|\cdot-x_k\|^2/2$ is $\tau_{k+1}$-strongly convex. 
Rearranging the terms in the above inequality, we obtain
\begin{align*}
    &\frac\mu 2A_{k+1} \|y_{k+1} - \tx_k\|^2 \\
    \le & A_k \gamma_k(y_k) + a_k \gamma_k(u) +  \frac{\tau_k}{2}\|u-x_k\|^2 - \frac{\tau_{k+1}}{2}\|u-x_{k+1}\|^2 - A_{k+1}\left[g(y_{k+1}) -\theta\right] \\
    = & A_k\left[ g(y_k) - g(u)\right] + \frac{\tau_k}{2}\|u-x_k\|^2 - A_{k+1}\left[ g(y_{k+1}) - g(u)\right] - \frac{\tau_{k+1}}{2}\|u-x_{k+1}\|^2 \\
    & + A_k\left[\gamma_k(y_k) - g(y_k)\right] + a_k\left[\gamma_k(u) - g(u)\right] + A_{k+1} \theta \\
    \le & t_k(u) - t_{k+1}(u) + 2 A_{k+1} \theta
\end{align*}
where the identity is due to the fact that $A_{k+1}=A_k+a_k$, and the last inequality is due to \eqref{def:tk} and the first inequality of \eqref{ineq:theta-1}.
\end{proof}

\begin{proposition}\label{prop:k0}
If $\rho \ge 2\sqrt{2}(\mu+L)\sqrt{\theta}/\sqrt{\mu}$, then the number of iterations $k_0$ to obtain a $\rho$-approximate stationary point of $g$ is at most
\begin{equation}\label{def:k0}
    k_0:=\frac{2\sqrt{L} + \sqrt{\mu}}{2\sqrt{\mu}} \log\left( \frac{(\mu+L)^2 d_0^2}{ \rho^2}  \frac{2\sqrt{L} + \sqrt{\mu}}{2\sqrt{\mu}} + 1 \right).
\end{equation}
\end{proposition}

\begin{proof}
It follows from the optimality condition of \eqref{eq:ynext} that
\[
g'(\tx_k) = (\mu + L)(\tx_k - y_{k+1}).
\]
Using the above relation and summing \eqref{ineq:disp} with $u=x^*$ from $k=0$ to $k-1$, we have
\begin{align*}
    \frac \mu {2(\mu + L)^2}  \sum_{i=0}^{k-1} A_{i+1}\|g'(\tx_i)\|^2 &=
    \frac\mu 2  \sum_{i=0}^{k-1} A_{i+1} \|y_{i+1} - \tx_i\|^2 \\
    &\le t_0(x^*) + 2 \sum_{i=0}^{k-1}A_{i+1} \theta 
    = \frac{d_0^2}{2} + 2 \sum_{i=0}^{k-1}A_{i+1} \theta,
\end{align*}
where the last identity follows from the facts that $A_0=0$ and $\tau_0=1$.
The above inequality and the assumption on $\rho$ imply that
\[
\min_{0\le i\le k-1} \|g'(\tx_i)\|^2 \le \frac{(\mu+L)^2}{\mu} \left( \frac{d_0^2}{\sum_{i=0}^{k-1} A_{i+1}} + 4\theta \right) \le \frac{(\mu+L)^2}{\mu} \frac{d_0^2}{\sum_{i=0}^{k-1} A_{i+1}} + \frac{\rho^2}2.
\]
In order to show $\min_{0\le i\le k-1} \|g'(\tx_i)\| \le \rho$, it suffices to show
\begin{equation}\label{ineq:verify}
    \frac{(\mu+L)^2}{\mu} \frac{d_0^2}{\sum_{i=0}^{k-1} A_{i+1}} \le \frac{\rho^2}2.
\end{equation}
Using Lemma \ref{lem:Ak} (c) and the fact that $k\ge k_0$ where $k_0$ is as in \eqref{def:k0}, we have
\[
\sum_{i=0}^{k-1} A_{i+1} \ge \frac{\exp(2k(\beta-1)/\beta)-1}{L(\beta^2-1)} \ge \frac{2(\mu+L)^2 d_0^2}{\mu \rho^2},
\]
and hence \eqref{ineq:verify} is proved.
\end{proof}

\section{Approximate implementations of ASF}\label{sec:LMC}

% \section{Approximate implementations of ASF}\label{sec:inexact}
% {\chen present only proximal Langevin and move Langevin to the appendix}
The implementation of RGO 
% is a highly nontrivial task, we have seen the implementation via rejection sampling in Algorithm \ref{alg:RGO-nonconvex} in the non-convex and semi-smooth setting.
in Algorithm \ref{alg:RGO-nonconvex} is exact, hence the samples of both RGO and ASF are unbiased.
% and hence the sample of ASF with Algorithm \ref{alg:RGO-nonconvex} as its step 2 is unbiased.
In contrast, it is shown in Theorem 1 of \citet{wibisono2019proximal} (resp., Theorem 2 of \citet{vempala2019rapid}) that the proximal Langevin algorithm (PLA) (resp., LMC) is biased.
From the perspective of proximal sampling, we give an explanation of this fact by showing that PLA and its equivalent method, namely, proximal Langevin Monte Carlo (PLMC) of \citet{bernton2018langevin}, can be viewed as an instance of ASF whose implementation of RGO is inexact. 
% A similar explanation for LMC is given in Appendix~\ref{sec:LMC}.
% Moreover, we give an explanation of LMC in Appendix~\ref{sec:LMC}.
% Moreover, we also show that the proximal Langevin Monte Carlo (PLMC) of \citet{bernton2018langevin} and the proximal Langevin algorithm (PLA) of \citet{wibisono2019proximal} are  approximate implementations of ASF.

% First, we note that PLA is equivalent to PLMC.
% % by choosing an appropriate initial distribution. 
% For ease of presentation, we will focus on PLMC.
% in the rest of this section.
Assume the potential function $f$ 
% of $\pi\propto \exp(-f)$ 
is convex and smooth. Recall that PLMC iteratively generates samples as follows: given $y_k \in \R^d$, then the next sample $y_{k+1}$ takes the form of $y_{k+1} = \text{prox}_{\eta f}(y_k) + \sqrt{2\eta}z$ where $ z \sim {\cal N}(0,I)$.
% \begin{equation}\label{eq:PLMC}
%     y_{k+1} = \text{prox}_{\eta f}(y_k) + \sqrt{2\eta}z,  \quad z \sim {\cal N}(0,I).
% \end{equation}
It is easy to verify that the following algorithm gives an equivalent form of PLMC.

\begin{algorithm}[H]
	\caption{Proximal Langevin Monte Carlo \citep{bernton2018langevin}}\label{alg:PLMC}
	\begin{algorithmic}
		\STATE 1. Sample $x_{k}\sim \exp[-\frac{1}{2\eta}\|x-\text{prox}_{\eta f}(y_k)\|^2]$
        \STATE 2. Sample $y_{k+1}\sim \pi^{Y|X}(y\mid x_k) \propto \exp[-\frac{1}{2\eta}\|y-x_k\|^2]$
	\end{algorithmic}
\end{algorithm}

% Indeed, since both steps 1 and 2 generate samples from Gaussian distributions, we could write the two steps explicitly. It is easy to check 
% \begin{align*}
% x_{k}&= \text{prox}_{\eta f}(y_k) + \sqrt{\eta} z_k, \quad z_k \sim N(0,I), \\
% y_{k+1}&= x_{k} + \sqrt{\eta} z_k', \quad z_k' \sim {\cal N}(0,I).
% \end{align*}
% Combining the above identities, we obtain \eqref{eq:PLMC}.
% \[
% y_{k+1} = \text{prox}_{\eta f}(y_k) + \sqrt{\eta} (z_k +z_k') \overset{d}{=} \text{prox}_{\eta f}(y_k) + \sqrt{2\eta}z,  \quad z \sim {\cal N}(0,I).
% \]

Next, we show that PLMC is an approximate implementation of ASF. Similar to ASF, PLMC also alternates between steps 1 and 2. More specifically, step 2 of PLMC plays the same role as step 1 of ASF, and step 1 of PLMC can be viewed as sampling from the proposal density $\propto \exp[-h_1(x)]$ without rejection, where 
$h_1(x):= f_{y_k}^\eta(\text{prox}_{\eta f}(y_k))+\frac{1}{2\eta}\|x-\text{prox}_{\eta f}(y_k)\|^2 \le f_{y_k}^\eta(x)$.
% \[
% h_1(x):= f_{y_k}^\eta(\text{prox}_{\eta f}(y_k))+\frac{1}{2\eta}\|x-\text{prox}_{\eta f}(y_k)\|^2 \le f_{y_k}^\eta(x).
% \]
Since $f_{y_k}^\eta(x)$ is the potential function of the RGO in step 2 of ASF. Hence, step 1 of PLMC is an approximate implementation of the RGO.
As a result, both PLMC and PLA are approximate implementations of ASF, and thus they are biased.

Following a similar argument,
% as in Section~\ref{sec:inexact}, 
we show that LMC can be viewed as an instance of ASF whose implementation of RGO is inexact.
Assume $f$ in the target distribution $\pi\propto \exp(-f)$ is convex and smooth and recall that the iterative step in LMC can be described as 
\begin{equation}\label{eq:LMC}
    y_{k+1} = y_k - \eta \nabla f(y_k) + \sqrt{2\eta}z,  \quad z \sim {\cal N}(0,I).
\end{equation}
We claim that the following algorithm gives an equivalent form of LMC \eqref{eq:LMC} from the proximal sampling perspective.

\begin{algorithm}[H]
	\caption{Langevin Monte Carlo}\label{alg:LMC}
	\begin{algorithmic}
		\STATE 1. Sample $y_{k}\sim \pi^{Y|X}(y\mid x_k) \propto \exp[-\frac{1}{2\eta}\|x_k-y\|^2]$
		\STATE 2. Sample $x_{k+1}\sim \exp[-\frac{1}{2\eta}\|x-y_k+\eta \nabla f(y_k)\|^2]$
	\end{algorithmic}
\end{algorithm}

Indeed, steps 1 and 2 can be equivalently written as 
\begin{align*}
x_{k+1}&= y_k - \eta  \nabla f(y_k) + \sqrt{\eta} z_k, \quad z_k \sim N(0,I), \\
y_{k+1}&= x_{k+1} + \sqrt{\eta} z_k', \quad z_k' \sim {\cal N}(0,I),
\end{align*}
where $y_{k+1}$ is the sample from step 1 in the next iteration. Combining the above identities, we have
\[
y_{k+1} = y_k - \eta \nabla f(y_k) + \sqrt{\eta} (z_k +z_k') \overset{d}{=} y_k - \eta \nabla f(y_k) + \sqrt{2\eta}z,  \quad z \sim {\cal N}(0,I).
\]

Moreover, LMC and ASF share the same step 1, and step 2 of LMC equivalently generates $x_{k+1}$ from $\exp[-h_1(x)]$ where
\begin{equation}\label{eq:h1-LMC}
    h_1(x):=f(y_k) +\inner{\nabla f(y_k)}{x-y_k} + \frac{1}{2\eta}\|x-y_k\|^2.
\end{equation}
% Indeed, it follows from step 2 of Algorithm \ref{alg:LMC} that
% \[
% x_{k+1}\sim \exp\left[-\frac{1}{2\eta}\|x-y_k+\eta \nabla f(y_k)\|^2\right] \propto \exp[-h_1(x)].
% \]
Using the definition of $h_1$ in \eqref{eq:h1-LMC} and the convexity of $f$, we have
\[
h_1(x) \le f(x) + \frac{1}{2\eta}\|x-y_k\|^2 = f_{y_k}^\eta(x).
\]
Note that $f_{y_k}^\eta(x)$ is the potential function of the RGO in step 2 of ASF. Hence, step 2 of LMC can be interpreted as an RGO implementation with the proposal density $\exp[-h_1(x)]$ but without rejection.
As a result, LMC is an approximate implementation of ASF and thus LMC is biased.

% More generally, from the proximal sampling perspective, the basic idea behind LMC (or its inexact RGO implementation) is to approximate $f(x)$ by its linearization $f(y_k)+\inner{\nabla f(y_k)}{x-y_k}$, and this idea has been widely used to implement the proximal mapping of $f$ in optimization methods such as gradient descent and accelerated gradient descent.

It is worth noting that many other sampling algorithms, for example, symmetric Langevin algorithm of \citet{wibisono2018sampling} and Metropolis-adjusted proximal gradient Langevin dynamics of \citet{mou2022efficient}, can also be shown to be approximate implementations of ASF in an analogous way.

\end{document}